\documentclass{sage}

\ifx\definitionMacros\undefined\else \fi
%
%
%
 \typeout{**************** begin:.definitions.tex *****************}
%
%
 \message{MacStyle->}
 \ifx\optionkeymacros\undefined\else \fi

\catcode`\Œ=\active\defŒ{{\aa}}       
\catcode`\º=\active\defº{\int}        
\catcode`\=\active\def{\c c}        
\catcode`\¶=\active\def¶{\partial}    
\catcode`\Ä=\active\defÄ{\oint}       
\catcode`\Æ=\active\defÆ{\triangle}   
\catcode`\Â=\active\defÂ{\neg}        
\catcode`\µ=\active\defµ{\mu}         
\catcode`\¿=\active\def¿{{\o}}        
\catcode`\¹=\active\def¹{\pi}         
\catcode`\Ï=\active\defÏ{{\oe}}       
\catcode`\§=\active\def§{{\ss}}       
\catcode`\ =\active\def {\dagger}     
\catcode`\Ã=\active\defÃ{\sqrt}       
\catcode`\·=\active\def·{\Sigma}      
\catcode`\Å=\active\defÅ{\approx}     
\catcode`\½=\active\def½{\Omega}      
\catcode`\£=\active\def£{{\it\$}}     
\catcode`\°=\active\def°{\infty}      
\catcode`\¤=\active\def¤{{\S}}        
\catcode`\¦=\active\def¦{{\P}}        
\catcode`\¥=\active\def¥{\bullet}     
\catcode`\»=\active\def»{\leavevmode\raise.585ex\hbox{\b a}}      
\catcode`\¼=\active\def¼{\leavevmode\raise.6ex\hbox{\b o}}        
\catcode`\­=\active\def­{\not=}       
\catcode`\²=\active\def²{\leq}        
\catcode`\³=\active\def³{\geq}        
\catcode`\Ö=\active\defÖ{\div}        
\catcode`\É=\active\defÉ{{\dots}}     
\catcode`\¾=\active\def¾{{\ae}}       
\catcode`\Ç=\active\defÇ{\ll}         
\catcode`\Ò=\active\defÒ{``}          
\catcode`\Á=\active\defÁ{!`}          
\catcode`\¢=\active\def¢{\rlap/c}     
\catcode`\Ô=\active\defÔ{`}           
\catcode`\Õ=\active\defÕ{'}           


\catcode`\=\active\def{{\AA}}       
\catcode`\'=\active\def'{\c C}        
\catcode`\¯=\active\def¯{{\O}}        
\catcode`\¸=\active\def¸{\Pi}         
\catcode`\Î=\active\defÎ{{\OE}}       
\catcode`\®=\active\def®{{\AE}}       
\catcode`\×=\active\def×{\diamond}    
\catcode`\¡=\active\def¡{\accent'27}  
\catcode`\Ó=\active\defÓ{''}          
\catcode`\±=\active\def±{\pm}         
\catcode`\È=\active\defÈ{\gg}         
\catcode`\À=\active\defÀ{?`}          
\catcode`\Ð=\active\defÐ{--}          
\catcode`\Ñ=\active\defÑ{---}         


\catcode`\Š=\active\defŠ{\"a}        
\catcode`\'=\active\def'{\"e}        
\catcode`\•=\active\def•{\"{\i}}     
\catcode`\š=\active\defš{\"o}        
\catcode`\Ÿ=\active\defŸ{\"u}        
\catcode`\Ø=\active\defØ{\"y}        
\catcode`\€=\active\def€{\"A}        
\catcode`\…=\active\def…{\"O}        
\catcode`\†=\active\def†{\"U}        
\catcode`\‡=\active\def‡{\'a}        
\catcode`\Ž=\active\defŽ{\'e}        
\catcode`\'=\active\def'{\'{\i}}     
\catcode`\—=\active\def—{\'o}        
\catcode`\œ=\active\defœ{\'u}        
\catcode`\ƒ=\active\defƒ{\'E}        
\catcode`\ˆ=\active\defˆ{\`a}        
\catcode`\=\active\def{\`e}        
\catcode`\"=\active\def"{\`{\i}}     
\catcode`\˜=\active\def˜{\`o}        
\catcode`\=\active\def{\`u}        
\catcode`\Ë=\active\defË{\`A}        
\catcode`\‹=\active\def‹{\~a}        
\catcode`\–=\active\def–{\~n}        
\catcode`\›=\active\def›{\~o}        
\catcode`\Ì=\active\defÌ{\~A}        
\catcode`\"=\active\def"{\~N}        
\catcode`\Í=\active\defÍ{\~O}        
\catcode`\‰=\active\def‰{\^a}        
\catcode`\=\active\def{\^e}        
\catcode`\"=\active\def"{\^{\i}}     
\catcode`\™=\active\def™{\^o}        
\catcode`\ž=\active\defž{\^u}        

\let\optionkeymacros\null

 \message{,}
%

 \def\registered{{\ooalign                                          
   {\hfil\kern+.05em\raise.12ex                                     
 \hbox{\tiny R}\hfil\crcr{\footnotesize\mathhexbox20D}}}}           
%
%
%
\message{TrueMath,}
 \newcommand{\TrueMath}    [1]{\mbox{$#1$}}                  
%
%

 \def\half{\TrueMath{\leavevmode\kern.1em \raise.5ex
                     \hbox{\the\scriptfont0 1}
                     \kern-.1em / \kern-.15em\lower.25ex
                     \hbox{\the\scriptfont0 2}
           }         }
%
 
 \newcommand{\onR}[1]{\in\mathbb{R}^{#1}}
 \def\cool#1{{\fontfamily{cmss}\selectfont #1}}
 \def\cools#1{{\emph{\fontfamily{cmss}\selectfont #1}}}
 \def\dots{\TrueMath{\ldots}}
%
%
 \message{ShortHand,}
%

%


%
%
 \message{References,}
%

%
%

%

%

%
%
%
%
%
 \message{Graphics,}
%
%
%

%
%
\message{Scripts,}
\def\set12{\newfont{\size12}{cmbx12}}

%
%
\message{Equations,}
%
%
%
%
%

\renewcommand{\frac}[2]{\TrueMath{\TrueMath{#1}\over\TrueMath{#2}}}
\newcommand{\define}   {\TrueMath{\,\buildrel \triangle \over = \,}}
%
%

%

%

%

%

%

\newcommand{\sign}      {\hbox{\rm sign}}
%

%

%
%
%
%

%
\def\lie#1#2{\TrueMath{{\cal L}_{\bf #1}(#2)}}               
\def\lies#1#2#3{\TrueMath{{\cal L}_{\bf #1}^{#3}(#2)}}       
%
%
%

%
%
%
%

%
%

%
\def\simless
{\mathop{\raise0.50ex\hbox{$<$}\kern-0.75em\lower0.50ex\hbox{$\sim$}}\nolimits}
\def\simgreat
{\mathop{\raise0.50ex\hbox{$>$}\kern-0.75em\lower0.50ex\hbox{$\sim$}}\nolimits}
\message{End}
\typeout{}
\typeout{***************** end: definitions.tex *****************}
%
%


\usepackage{mathtools}
\usepackage{amsmath}
\usepackage{amsthm}
\usepackage{amsfonts}
\usepackage{amsbsy}
\usepackage{amssymb}
\usepackage{latexsym}

\usepackage{wrapfig}
\usepackage{natbib,url}

\usepackage{multirow}
\usepackage{fullpage}
\usepackage{subfigure}
\usepackage{xfrac}
\usepackage{mathrsfs}
\usepackage{cases}
\usepackage{empheq}

\usepackage{algorithmic}
\usepackage{algorithm}
\usepackage{subeqnarray}

\usepackage{soul}
\usepackage{xcolor}
\usepackage{comment}
\usepackage{minitoc}

\usepackage[font=scriptsize]{caption}

\newcommand*\reddot{{\protect \includegraphics[width=0.8em]{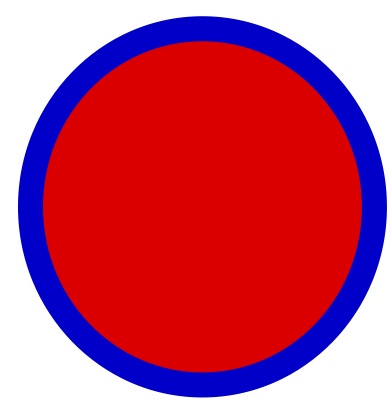}}}
\newcommand*\greendot{{\protect \includegraphics[width=0.8em]{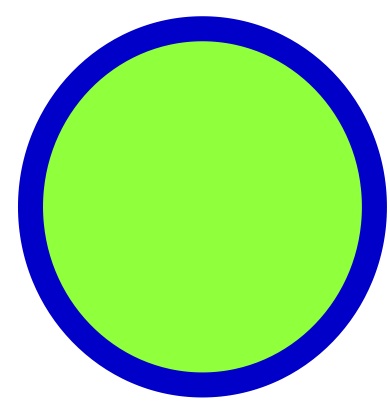}}}

\usepackage{color}

\newlength\myindent
\setlength\myindent{2em}

\newtheorem*{theorem*}{Theorem}
\newtheorem*{proposition*}{Proposition}
\newtheorem{proposition}{Proposition}

\newtheorem{definition}{Definition}

\newtheorem*{example*}{Example}
\newtheorem*{property*}{Property}


 \usepackage{tikz}

 \usepackage{marvosym}
 \usepackage{amssymb}
 \usepackage{pifont}
 \usepackage{xspace}

%


 \def\*tr{^{\,*T}}

 \renewcommand{\onR}[1]{\in\mathbb{R}^{#1}}

 \DeclareSymbolFont{rsfs}{U}{rsfs}{m}{n}
 \DeclareSymbolFontAlphabet{\mathrsfs}{rsfs}
 \def\Cont#1{\mathrsfs{C}^{#1}}

 \def\cool#1{{\fontfamily{cmss}\selectfont #1}}
 \def\cools#1{{\emph{\fontfamily{cmss}\selectfont #1}}}

 \newcounter{HCRLalgorno}

\newcommand\moveFigureUp[2]{%
   \raisebox{#1}{#2}}

\makeatletter
 \newcommand{\customlabel}[2]{%
   \protected@write \@auxout {}{\string \newlabel {#1}{{#2}{}}}}
\makeatother



\journal{The International Journal of Robotics Research}
\volume{XX}
\issue{XX}
\copyrightline{$\copyright$ Copyright}
\firstpage{1}
\lastpage{39}
\doi{doi number}
\articletype{Article Type}
\pubyear{2015}

\markboth{}{Zhao, Fernandez, Sentis,  / Robust Planning for Agile Bipedal Locomotion} 

\begin{document}


\title{A Framework for Planning and Controlling Non-Periodic Bipedal Locomotion}


\author{Ye Zhao$^1$, Benito R. Fernandez$^2$, and Luis Sentis$^{1}$\thanks{Corresponding author;
                 E-mail addresses: yezhao@utexas.edu, 
                                 benito@austin.utexas.edu, 
                                 lsentis@austin.utexas.edu.}
}

\address{$^1$Human Centered Robotics Laboratory, The University of Texas at Austin, TX, USA.\\
$^2$Neuro-Engineering Research and Development Laboratory, The University of Texas at Austin, TX, USA.
}


\maketitle

\begin{abstract}
This study presents a theoretical framework for planning and controlling agile bipedal locomotion based on robustly tracking a set of non-periodic apex states. Based on the prismatic inverted pendulum model, we formulate a hybrid phase-space planning and control framework which includes the following key components: (1) a step transition solver that enables dynamically tracking non-periodic apex or keyframe states over various types of terrains, (2) a robust hybrid automaton to effectively formulate planning and control algorithms, (3) a phase-space metric to measure distance to the planned locomotion manifolds, and (4) a hybrid control method based on the previous distance metric to produce robust dynamic locomotion under external disturbances. Compared to other locomotion frameworks, we have a larger focus on non-periodic gait generation and robustness metrics to deal with disturbances. Such focus enables the proposed control framework to robustly track non-periodic apex states over various challenging terrains and under external disturbances as illustrated through several simulations. Additionally, it allows a bipedal robot to perform non-periodic bouncing maneuvers over disjointed terrains.
\end{abstract}

\keywords{Phase-Space Locomotion Planning, Non-Periodic Apex Stability, Robust Hybrid Automaton, Optimal Control.}

\section{Introduction}
\label{sec:intro}

Humanoid and legged robots may soon nimbly maneuver over highly rough terrains and cluttered environments. This paper formulates a new framework for the generation of trajectories and an optimal controller to achieve locomotion in those types of environments using a phase-space formalism.  Using prismatic inverted pendulum dynamics and given a set of desired apex states, we present a phase-space planner that can precisely negotiate the challenging terrains.  The resulting trajectories are formulated as phase-space manifolds.  Borrowing from sliding mode control theory, we use the newly defined manifolds and a Riemannian distance metric to measure deviations due to external disturbances or model uncertainties.  A control strategy based on dynamic programming is proposed that steers the locomotion process towards the planned trajectories. Finally, we devise a robust hybrid automaton to effectively formulate control algorithms that involve both continuous and discrete input processes for advanced disturbance recovery. 

[\cite{raibert1986legged}] pioneered robust hopping locomotion of point-foot monoped and bipedal robots using simple dynamical models but with limited applicability to semi-periodic hopping motions. His focus is on dynamically stabilizing legged robots. Instead, our focus is on precisely tracking apex states, i.e. a discrete set of desired robot center of mass (CoM) positions and velocities along the locomotion paths. Such capability is geared towards the design of highly non-periodic gaits in cluttered environments or the characterization of dynamic gait structure in a generic sense. [\cite{pratt2001virtual}] achieved point foot biped walking using a virtual model control method but with limited applicability to mechanically supported robots. Unsupported point foot biped locomotion in moderately rough terrains has been recently achieved by [\cite{grizzle2014models}] and [\cite{ramezani2014performance}] using Poincar\'e stability methods. However, Poincar\'e maps cannot be leveraged to achieving non-periodic gaits for highly irregular or disjointed terrains, which is one of the main applications of our proposed framework. 

In [\cite{frazzoli2001robust}] a robust hybrid automaton is introduced to achieve time-optimal motion planning of a helicopter in an environment with obstacles. The same group studies robustness to model uncertainties [\cite{schouwenaars2003robust}] but ignores external disturbances. More recently, [\cite{majumdar2013robust}] accounts for external disturbances (like cross-wind) by computing funnels via Lyapunov functions and switching between these funnels for maneuvering unmanned air vehicles in the presence of obstacles and disturbances. We apply some of these concepts to point-foot locomotion. Our dynamic system is hybrid, i.e., possessing a different set of dynamic equations for each contact stage.  As a result, we propose a hybrid control algorithm that switches states when the physical system changes the number of contacts.  We use the hybrid automaton framework as a tool for planning and control of bipedal locomotion.  We in fact extend their use of hybrid automaton to accommodate for hybrid systems. Additionally we re-generate phase-space trajectories on demand while the previous works rely on pre-generated primitives. 

Optimal control of legged locomotion over rough terrains are explored in [\cite{Byl:09, kuindersma2015optimization, dai2012optimizing, siyuan2015optimization}]. [\cite{manchester2011stable}] proposed a control technique to stabilize non-periodic motions of under-actuated robots with a focus on walking over uneven terrain. The control is achieved by constructing a lower-dimensional system of coordinates transverse to the target cycle and then computing a receding-horizon feedback controller to exponentially stabilize the linearized dynamics. However, this work does not generate motion trajectories, which is a central part of our work. Also, in contrast with these works, we propose a metric for robustness to recover from disturbances. In [\cite{saglam2014robust}], a controller switching strategy for walking on irregular terrains is proposed. They optimize policies for switching between a set of known controllers. Their method is further extended to incorporate noise on the terrain and through a value iteration process they achieve a certain degree of robustness through switching. However, they focus on mesh learning by considering a set of known controllers. Instead, our paper is focused on creating new controllers from scratch for general types of terrains. Additionally, their work is focused on 2D locomotion whereas we focus on 3D.

A general optimal motion control framework for behavior synthesis of human-like avatars is presented in [\cite{mordatch2012discovery}]. One key missing aspect is quantifying robustness and analyzing feedback stability. Additionally, this work does not address locomotion of point foot robots.

In light of this discussion, our contributions are the following: (1) we formulate a hybrid automaton to characterize non-periodic locomotion dynamics, (2) using the automaton, we synthesize motion plans in the phase-space to maneuver over irregular terrains while tracking a set of desired keyframes (i.e. apex states), (3) a phase-space manifold is created with a Riemannian distrance metric to measure nominal trajectory deviations and design an in-step controller, and (4) we derive an optimal control framework to recover from disturbances and uncertainty and study its stability. Overall, the key difference with previous works is our focus on trajectory generation and robust control of non-periodic gaits. We are less centered on dynamic balance or moving from a start to a finish location but instead on tracking desired keyframes composed of discrete robot CoM positions and velocities. As such our framework provides a greater level of granularity that makes it more suitable for designing gates in cluttered environments.

\section{Related Work} 
\label{sec:relatedwork}

The Capture Point method [\cite{pratt2006capture}] provides one of the most practical frameworks for locomotion. Sharing similar core ideas, divergent component of motion \mbox{[\cite{takenaka2009real}]} and extrapolated center of mass \mbox{[\cite{hof2008extrapolated}]} were independently proposed. Extensions to the Capture Point method, [\cite{morisawa2012balance, englsberger2015three}], allow locomotion over rough terrains. Motion planning techniques based on interpolation through kinematic configurations have been explored, among other works, by [\cite{hauser2014fast}] and [\cite{pham2013kinodynamic}]. Those techniques are making great progress towards mobility and locomotion in various kinds of environments. The main difference from these studies is that our controller explicitly accounts for robustness and stability to achieve under-actuated dynamic walking. If these works were to be implemented in unstable robots such as point-foot bipeds, they would lose balance and fail to recover. Our planner allows for continuous recovery without explicitly controlling the robot's center of mass.

In [\cite{hobbelen2007disturbance}], a gait sensitivity norm is presented to measure disturbance rejection during dynamic walking. In [\cite{hamed2015robust}], sensitivity analysis with respect to ground height variations is performed to model robustness of orbits. These techniques are limited to cyclic walking gaits. The work in [\cite{arslan2012reactive}] unifies planning and control to provide robustness. In [\cite{nguyen2015optimal}], an optimal robust controller is designed based on control Lyapunov function to achieve bipedal locomotion with model uncertainties. However, the techniques in these two works are only applied to planar robots.

Numerous studies have focused on recovery strategies upon disturbances [\cite{hofmann2006robust, zhao2013biologically}]. Various recovery methods have been proposed based on ankle, hip, knee, and stepping strategies [\cite{kuo1992human, stephens2007humanoid}]. In [\cite{hyon2007disturbance}], a stepping controller based on ground contact forces is implemented in a humanoid robot. The study in [\cite{komura2005feedback}] controlship angular momentum to achieve planar bipedal locomotion. In our study, we simultaneously control angular momentum, CoM height and foot placements to achieve unsupported rough terrain walking. 

Online trajectory optimization is explored in [\cite{tassa2012synthesis, audren2014model}] for complex humanoid behaviors. [\cite{stephens2010push}] uses model predictive control (MPC) for push recovery by planning future steps. [\cite{wieber2006trajectory}] presents a linear MPC scheme for zero moment point control with perturbations. The problem of MPC is that it cannot find a globally optimal solution due to the finite horizon. In contrast, we use dynamic programming to exhaustively search for the global optimal solution. Since the inverted pendulum model has low dimensional states, the ``curse of dimensionality'' is not an issue for our case. 

\section{Problem Definition}
\label{sec:ProblemDefinition}

We first present basic control formalism and manifold analysis that will allow us to characterize, plan and control non-periodic locomotion processes in later sections.

\subsection{System Equations}\label{sec:SystemEquations}
Legged robots can be characterized as Multi-Input/Multi-Output (MIMO) systems.  Let us assume that a bipedal robot can be characterized by $n_j$ joint degrees of freedom (DOF), $\boldsymbol{q}=[q_1,q_2,\dots,q_{n_j}]^T\onR{n_j}$. Letting $\boldsymbol{x}(t)=[\boldsymbol{q}^T(t),\dot{\boldsymbol{q}}^T(t))]^T\in\mathbb{R}^{n}$, be the state-space vector ($n=2n_j$), $\boldsymbol{u}(t)\in\mathbb{R}^{m}$, represents the control input vector (generalized torques and forces), and defining $\boldsymbol{f}(\boldsymbol{x}(t))$, $\boldsymbol{g}(\boldsymbol{x}(t))$, and $\boldsymbol{h}(\boldsymbol{x}(t))$ in the obvious manner, the mechanical model is expressed in state variable form as
\begin{subeqnarray}\label{eqs:plant} 
\dot{\boldsymbol{x}}(t) &=&  \boldsymbol{f} (\boldsymbol{x}(t)) 
                    +   \boldsymbol{g} (\boldsymbol{x}(t)) \boldsymbol{u}(t)
                    +   \boldsymbol{J}_d (\boldsymbol{x}(t)) \boldsymbol{d}(t),  \label{eq:plantStates}\\
   \boldsymbol{y}(t)   &=&  \boldsymbol{h} (\boldsymbol{x}(t)),                  \label{eq:plantOutputs}
\end{subeqnarray}
where, $\boldsymbol{d}(t)$ represent the generalized external disturbance forces, and $\boldsymbol{J}_d(\boldsymbol{q}(t))$ is the disturbance distribution matrix.  
The output vector, $\boldsymbol{y}(t) = [y_1,y_2,\dots,y_p]^T\in \mathbb{R}^{p}$ is generated by $\boldsymbol{h}(\boldsymbol{x}(t))$, that may represent positions and/or velocities in the task space. Without loss of generality, let us consider systems in the normal form, where $\boldsymbol{h}(\cdot)$ is at least $\Cont{r}$, where $r$ is the relative order of the output. The disturbances and modeling errors satisfy the matching conditions [\cite{Fernandez:PhD}]. 
\subsection{System Normalization for Phase-Space Control Design}
\label{sec:Normalization} 
General robotic systems are not in normal form, but we can transform them by finding what relative order of the output derivatives are explicitly controllable .  Each of the outputs $y_i$ in Eq.~(\ref{eq:plantOutputs}) has a relative order $r_i$, defined by the smallest derivative order where the control appears,
 \begin{subeqnarray}
     y_i^{[k]}   &=& \dfrac{d^ky_i}{dt^k} = \lies{\boldsymbol{f}}{h_i(\boldsymbol{x})}{k} 
                +  \lie{\boldsymbol{g}}{\lies{\boldsymbol{f}}{h_i(\boldsymbol{x})}{k-1}}\boldsymbol{u},\\
     \lie{\boldsymbol{g}}{\lies{\boldsymbol{f}}{h_i(\boldsymbol{x})}{k-1}}   &  =  & 0 \quad ~{\rm for} \quad 0\le k<r_i
     \label{eq:RO},\\
     y_i^{[r_i]} &=& \lies{\boldsymbol{f}}{h_i(\boldsymbol{x})}{r_i} 
                +  \lie{\boldsymbol{g}}{\lies{\boldsymbol{f}}{h_i(\boldsymbol{x})}{r_i-1}}\boldsymbol{u},\\
     \lie{\boldsymbol{g}}{\lies{\boldsymbol{f}}{h_i(\boldsymbol{x})}{r_i-1}} & \ne & 0 \quad ~
          \quad \forall \boldsymbol{x} \in \mathbb{S}_i \subset \mathbb{R}^n,
     \label{eq:RO_2}
  \end{subeqnarray}
where, $\lies{\boldsymbol{f}}{h_i(\boldsymbol{x})}{0}=h_i(\boldsymbol{x})$, $\lie{\boldsymbol{f}}{\boldsymbol{h}}$ and $\lie{\boldsymbol{g}}{\boldsymbol{h}}$ are the directional Lie derivatives of function $\boldsymbol{h}(\boldsymbol{x})$ in the directions of $\boldsymbol{f}(\boldsymbol{x})$ and $\boldsymbol{g}(\boldsymbol{x})$ respectively [\cite{Isidori:Book}], and $\mathbb{S}_i$ is the output-controllable subspace, where the Lie derivative in Eq.~(\ref{eq:RO_2}) does not vanish,
\begin{equation}
\mathbb{S}_i~=~\Bigl\{~\boldsymbol{x} \onR{n}~~\left|~~\lie{\boldsymbol{g}}{\lies{\boldsymbol{f}}{h_i(\boldsymbol{x})}{r_i-1}}~\ne~0\right.\Bigr\}.
\label{uncontrollableSpace}
\end{equation}
The relative order tells us that the $r_i^{\rm th}$-derivative of output $y_i$ can be explicitly controlled. The region where $\mathbb{S}_i$ vanishes, entails the system looses relative order and hence the $r_i^{\rm th}$-derivative is no longer controllable (at least explicitly). For a controllable system, $r_i \le n$. Following the normalization  procedure, we get the output controllable subspace,
\begin{subeqnarray}\label{eq:xi}
  \xi_{i,1} &=& y_i = h_i(\boldsymbol{x}) = \lies{\boldsymbol{f}}{h_i(\boldsymbol{x})}{0}, \\[-2.5mm]
              &\dots& \nonumber\\[-2.5mm]
  \xi_{i,j} &=& y_i^{[j-1]} = \dot {\xi}_{i,j-1} = \lies{\boldsymbol{f}}{h_i(\boldsymbol{x})}{j-1} 
                              \quad ~{\rm for} \quad 1<j<r_i,\\[-2.5mm]
              &\dots& \nonumber\\[-2.5mm]
            &~& y_i^{[r_i]}   = \dot {\xi}_{i,r_i}   = \lies{\boldsymbol{f}}{h_i(\boldsymbol{x})}{r_i} 
                              + \lie{\boldsymbol{g}}{\lies{\boldsymbol{f}}{h_i(\boldsymbol{x})}{r_i-1}}\boldsymbol{u}(t).
\end{subeqnarray}
The output space variables, $\boldsymbol{\xi}_i = [\xi_{i,1}, \xi_{i,2}, \dots, \xi_{i,r_i-1}]^T\onR{r_i}$ represent the phase-space for the $i$-th output. For instance, the output phase-space for locomotion control could be chosen to be the robot's center of mass. We can concatenate all $\boldsymbol{\xi}_i$, $\forall i=1,2,\dots,m$ into a single phase-space vector $\boldsymbol{\xi} = [\boldsymbol{\xi}_1^T, \boldsymbol{\xi}_2^T, \dots, \boldsymbol{\xi}_m^T]^T\onR{r}$, where, $r=\sum{r_i}$.  
For phase-space motion, we define a phase-space manifold $\mathcal{M}_i$ for each task-space output, $y_i$, in terms of its phase-space vector, $\boldsymbol{\xi}_i$,
\begin{equation}
 \mathcal{M}_i~=~\Bigl\{~\boldsymbol{\xi}_i \in \mathbb{R}^{r_i}\subset\mathbb{R}^n~~\left|~~\sigma_i
             ~\define~\sigma_i(\boldsymbol{\xi}_i)~=~ 0\right.\Bigr\}, \label{eq:surface}
\end{equation}
where, $\sigma_i$ is referred to as the $i^{\rm th}$ element of \cools{deviation vector}, which measures the deviation distance from the manifold $\mathcal{M}_i$ using a Riemannian metric.  In order to be able to control this deviation, the order of the manifold is one less than the relative order of the $i^{\rm th}$-output, i.e., $r_i-1$. For most legged robots (not considering actuator dynamics), the relative order is $r=2$.

\section{Prismatic Inverted Pendulum Dynamics on a Parametric Surface}
\label{sec:model}

\begin{figure*}
 \centering
 \includegraphics[width=0.95\linewidth]{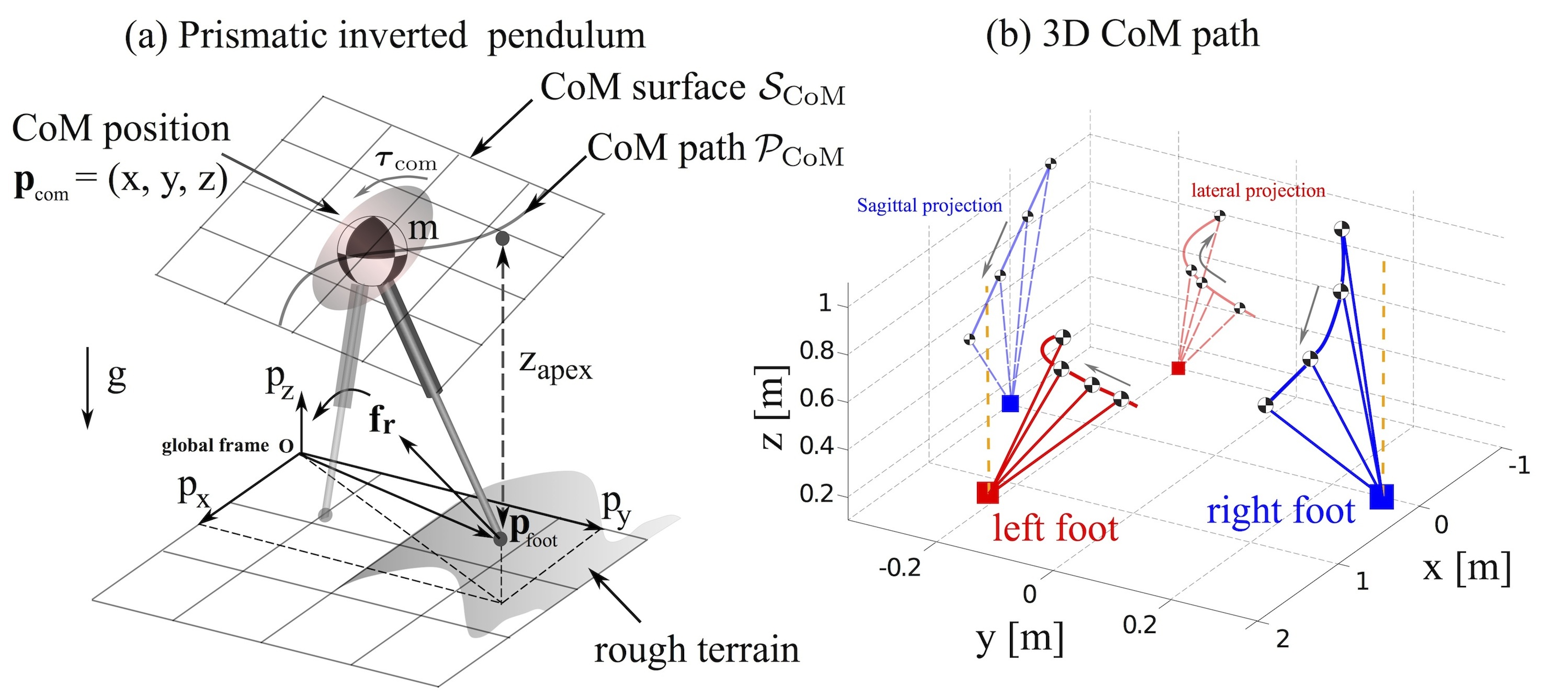}
 \caption{\captionsize 3D prismatic inverted pendulum model. (a) We define a prismatic inverted pendulum model with all of its mass located at its base while equipping it with a flywheel to generate moments. We restrict the movement of the center of mass to 3D planes (surfaces) $\mathcal{S}_{\rm CoM}$. (b) shows motions of pendulum dynamics restricted to a 3D plane.}
 \label{fig:Inverted_Pendulum}
\end{figure*}
The dynamics of point foot bipedal robots in generic terrain topologies during single contact can be mechanically approximated as an inverted pendulum model (see Fig.~\ref{fig:Inverted_Pendulum}).  Our model consists of a prismatic massless joint with all the mass concentrated on the hip position, defined as the 3D CoM position, $\boldsymbol{p}_{\rm com}=(x, y, z)^T$, and a flywheel spinning around it, with orientation angles $\boldsymbol{R} = (\phi, \theta, \psi)^T$.  The objective of locomotion is to move the robot's CoM along a certain path from point A to B over a terrain. As such, we first specify a 3D surface, $\mathcal{S}_{\rm CoM}$, where the CoM path will exist, which in general, may have the following implicit form,
\begin{align}\label{eq:genericSurface}
 \mathcal{S}_{\rm CoM}~=~\Bigl\{\boldsymbol{p}_{\rm com} \onR{3}
            ~~\left|~~\psi_{\rm CoM}(\boldsymbol{p}_{\rm com})~=~ 0\right.\Bigr\}.
\end{align}
This surface can be specified in various ways, such as via piecewise arc geometries [\cite{mordatch2010robust, srinivasan2006computer}]. Once the controller is designed, the CoM will follow a concrete trajectory $\mathcal{P}_{\rm CoM}$ (as shown in Fig.~\ref{fig:Inverted_Pendulum}), which we specify via piecewise splines described by a progression variable, $\zeta \in [\zeta_{j-1}, \zeta_j]$, for the $j^{\rm th}$ path manifold, i.e.
\begin{eqnarray}\label{eq:splineManifold}
 \mathcal{P}_{\rm CoM} &= \bigcup_{j}^{} \mathcal{P}_{{\rm CoM}_j} 
   \subseteq \mathcal{S}_{\rm CoM},\hspace{0.2in} 
   \mathcal{P}_{{\rm CoM}_j} &= 
   \Bigl\{~\boldsymbol{p}_{{\rm com}_j} \in \mathbb{R}^3
         ~~\left|~~\boldsymbol{p}_{{\rm com}_j} = \sum_{k = 0}^{n_p} \boldsymbol{a}_{jk} \zeta^k\right.\Bigr\},
\end{eqnarray}
where $n_p$ is the order of the spline degree.  The progression variable $\zeta$ is therefore the arc length along the CoM path acting as the Riemannian metric for distance. Each $\boldsymbol{a}_{jk} \in \mathbb{R}^3$ is the coefficient vector of $k^{\rm th}$ order. To guarantee the spline smoothness, $\boldsymbol{p}_{\rm com}$ requires the connection points, i.e. the knots at progression instant $\zeta_j$, to be $\Cont{n_p-1}$ continuous, 
\begin{align}
\boldsymbol{p}^{[l]}_{{\rm com}_j}(\zeta_j) 
  = \dfrac{d^l\boldsymbol{p}_{{\rm com}_j}}{d\zeta^l}(\zeta_j) 
  = \boldsymbol{p}^{[l]}_{{\rm com}_{j+1}}(\zeta_j), 
  \quad \forall \; 0 \leq l \leq n_p-1.
\end{align}
The purpose of introducing the CoM manifold $\mathcal{S}_{\rm CoM}$ is to constrain CoM motions on surfaces that are designed to conform to generic terrains while allowing free motion within this surface. Following a concrete CoM path is achieved by selecting proper control inputs as we will see further down. The CoM path manifold, $\mathcal{P}_{\rm CoM}$ (embedded in $\mathcal{S}_{\rm CoM}$), can be represented in the phase-space, $\boldsymbol{\xi}$.  We call this representation as the \emph{phase-space manifold} and define it as,
\begin{align}\label{eq:Mcom}
 \mathcal{M}_{\rm CoM} = \bigcup_{j}^{} \mathcal{M}_{{\rm CoM}_j}, 
   \qquad 
    \mathcal{M}_{{\rm CoM}_j} = \Bigl\{~\boldsymbol{\xi} \in \mathbb{R}^6
            ~~\left|~~\sigma_j(\boldsymbol{\xi})~=~ 0\right.\Bigr\},
\end{align}
which is the core manifold used in our phase-space planning and control framework.  The function $\sigma_j(\boldsymbol{\xi})$ is an implicit function in the phase space measuring the distance to the manifold.

\subsection{Dynamic Equations of Motion}
Besides the CoM path surface previously described, pendulum dynamics can be characterize via formulating the dynamic balance of moments of the pendulum system. For our single contact scenario, the sum of moments, ${\bf m}_i$, with respect to the global reference frame (see Fig.~\ref{fig:Inverted_Pendulum}) is 
\begin{equation}\label{eq:balance-2}
\sum_{i} {\bf m}_i = -\boldsymbol{p}_{\rm foot} \times \boldsymbol{f}_{r} + \boldsymbol{p}_{\rm com} \times \Big( \boldsymbol{f}_{\rm com} + m \, \boldsymbol{g} \Big) +
\boldsymbol{\tau}_{\rm com} = 0,
\end{equation}
where, $\boldsymbol{p}_{\rm foot} = (x_{\rm foot}, y_{\rm foot}, z_{\rm foot})^T$ is the position of the foot contact point (i.e., left or right foot), $\boldsymbol{f}_{r}$ is the three dimensional vector of ground reaction forces, $\boldsymbol{f}_{\rm com} = m (\ddot{x}, \ddot{y}, \ddot{z})^T$ is the vector of center of mass inertial forces, $\boldsymbol{\tau}_{\rm com} = (\tau_x, \tau_y, \tau_z)^T$ is the vector of angular moments of the modeled flywheel attached to the inverted pendulum, $m$ is the total mass, and $\boldsymbol{g} \in \mathbb{R}^3$ corresponds to the gravity field.  The system's linear force equilibrium can be formulated as $\boldsymbol{f}_{r} = \boldsymbol{f}_{\rm com} + m \, \boldsymbol{g}$, allowing us to simplified Eq.~(\ref{eq:balance-2}) to:
\begin{equation}\label{eq:balance-3}
\Big(\boldsymbol{p}_{\rm com} - \boldsymbol{p}_{\rm foot} \Big) \times (\boldsymbol{f}_{\rm com} + m \, \boldsymbol{g}) = -\boldsymbol{\tau}_{\rm com}.
\end{equation}
For our purposes, we consider only the class of prismatic inverted pendulums whose center of mass is restricted to a path surface $\mathcal{S}_{\rm CoM}$ as indicated in Eq.~(\ref{eq:genericSurface}). Moreover, for simplicity we only consider 3D surfaces that are invariant to the lateral CoM coordinate as we will see in Eq. (\ref{eq:linear2DSurface}). This consideration allows to decouple the dynamics of the Sagittal plane from those of the lateral plane. Considering as our output state the CoM positions, $\boldsymbol{p}_{\rm com}$, the state space, $\boldsymbol{\xi}= (\boldsymbol{p}_{\rm com}^T, \boldsymbol{\dot{p}}_{\rm com}^T)^T = (x,y,z,\dot{x},\dot{y},\dot{z})^T \in \Xi \subseteq \mathbb{R}^6$ is the phase-space vector. Then from Eq. (\ref{eq:balance-3}) it can be shown that the prismatic inverted pendulum model for a $q^{\rm th}$ walking step, is simplified to the following control system
\begin{align}\label{eq:accel}
\dot{\boldsymbol{\xi}} = \boldsymbol{\mathcal{F}}(q, \boldsymbol{\xi}, \boldsymbol{u}) = 
 \begin{pmatrix}
  \dot{x}\\
  \dot{y}\\
  \dot{z}\\
  \omega_q^2 (x - x_{{\rm foot}_q}) - \frac{ \omega_q^2}{mg}\tau_y\\[2mm]
  \omega_q^2 (y - y_{{\rm foot}_q}) - \frac{ \omega_q^2}{mg}\tau_x\\[2mm]
 a_q\omega_q^2 (z - z_{{\rm foot}_q}) - \frac{a_q\omega_q^2}{mg}\tau_y
 \end{pmatrix},
\end{align}
where the phase-space asymptotic slope is defined as
\begin{align}\label{eq:CoMaccelRate}
\omega_q = \sqrt{\dfrac{g}{z_{{\rm apex}_q}}}, \;{\rm with}\; z_{{\rm apex}_q} = (a_q \cdot x_{{\rm foot}_q} + b_q - z_{{\rm foot}_q}),
\end{align}
where $g$ is the gravity constant, $a_q$ and $b_q$ are the slope and constant scalars for the linear CoM path surfaces that we consider (see Fig. \ref{fig:3Dsinglecontact}), i.e.
\begin{align}\label{eq:linear2DSurface}
\mathcal{S}_{{\rm CoM}_q} = \left\{(x,y,z) \onR{3} \quad\Big|\quad 
                            \psi_{{\rm CoM}_q}(x,y,z) = z - a_q x - b_q = 0\right\}.
\end{align}
\begin{figure}[t]
 \centering
 \includegraphics[width=0.85\linewidth]{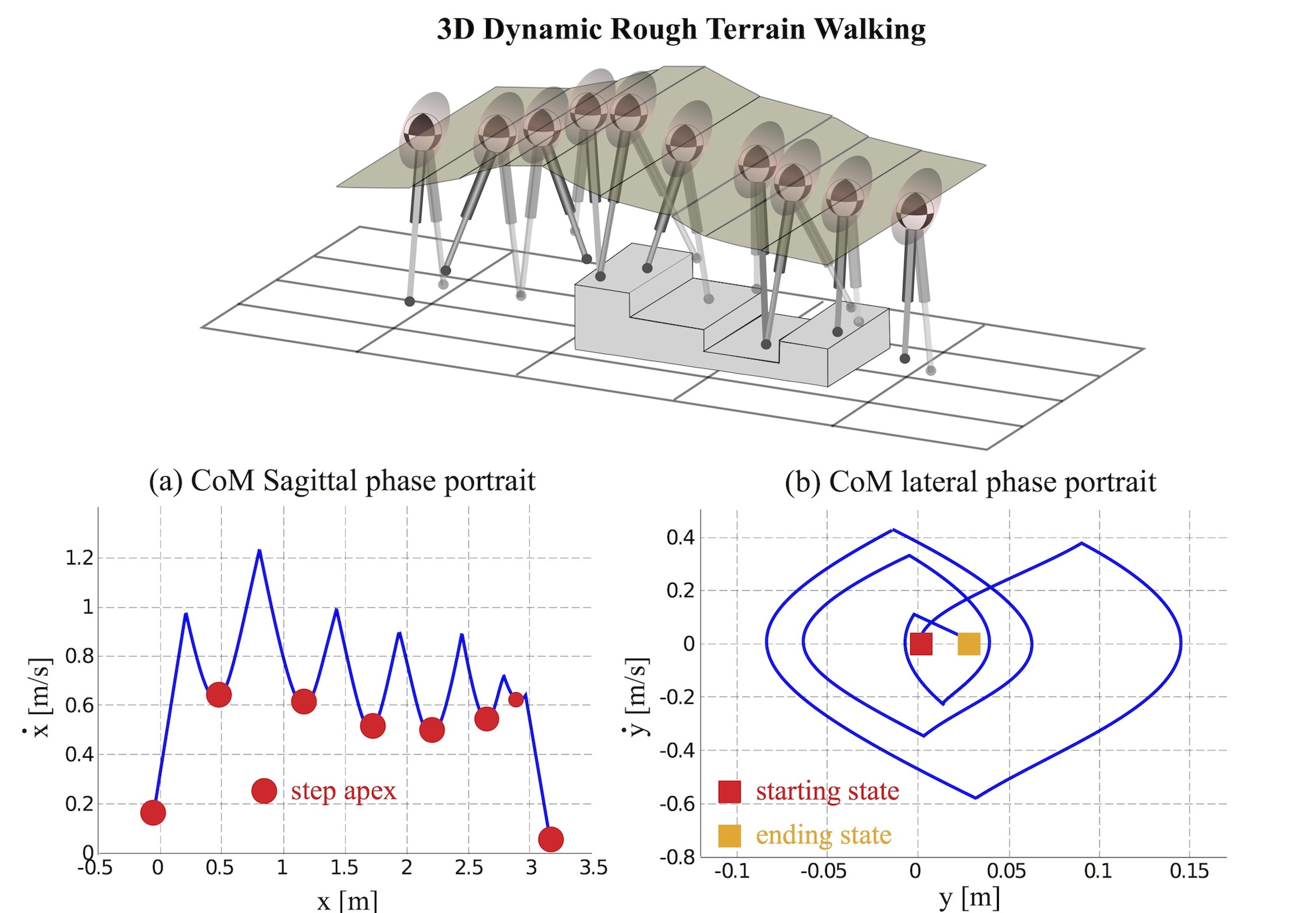}
 \caption{\captionsize 3D phase-space planning. Given step apex conditions, single contact dynamics generate the valley profiles shown in (a). (b) depicts a similar strategy in the lateral plane. However, since foot transitions have already been determined, what is left is to determine foot lateral positions. This is done so the lateral CoM behavior shown in (b) follows a semi-periodic trajectory that is bounded within a closed region.}
 \label{fig:3Dsinglecontact}
\end{figure}
\noindent We have defined $z_{{\rm apex}_q}$ such that it corresponds to the vertical distance between the CoM and the location of the foot contact at the instant where their Sagittal projections are identical, i.e. the \emph{Sagittal Apex}; $\boldsymbol{\mathcal{F}}$ represents a vector field of inverted pendulum dynamics. In general, there will be a hybrid control policy, $\boldsymbol{u}=\boldsymbol{\pi}(q,\boldsymbol{\xi})$, defined by the control variables $\boldsymbol{u} = \{\omega_q, \boldsymbol{\tau}_{{\rm com}_q}, \boldsymbol{p}_{{\rm foot}_q}\} \in \mathcal{U}$,  where, $\mathcal{U}$ is an open set of admissible control values. The sets $\Xi$ and $\mathcal{U}$ are assumed to be compact. 
\begin{definition}[Sagittal Apex]\label{def:sagapex}
The Sagittal apex occurs when the projection of the CoM is equal to the location of the foot contact in the system's Sagittal axis. 
\end{definition}

\subsection{Nominal Phase-Space Trajectory Generation}
We will first focus on the generation of trajectories in the Sagittal plane of the robot's walking reference. Sagittal dynamics are represented \-- ignoring for simplicity, the discrete variable, $q$, \-- in the first and fourth row of the system of Eq. (\ref{eq:accel}), i.e.
\begin{equation}\label{eq:dynx}
\boldsymbol{\dot x} = \boldsymbol{\mathcal{F}_x}(\boldsymbol{x}, \boldsymbol{u_x})=
\begin{pmatrix}
\dot x\\
\omega^2 (x - x_{\rm foot}) - \frac{ \omega^2}{mg}\tau_y
\end{pmatrix}.
\end{equation}
This system would be fully controllable if its continuous control inputs, $\omega$ and $\tau_y$, were unconstrained. However, their limited range urges us to first consider the motion trajectories under nominal (i.e. open loop) values. As we previously motivated in Eq. (\ref{eq:linear2DSurface}), the path manifold, $\mathcal{S}_{\rm CoM}$ is defined a priori to conform to the terrains via methods not considered in this paper. From Eq. (\ref{eq:CoMaccelRate}), once the path manifold is defined and for known contact locations, the set of phase-space asymptotic slopes, $\omega$, is also known as shown in Eq. (\ref{eq:CoMaccelRate}). For simplicity, the nominal flywheel moments are designed to be null, i.e. $\tau_y = 0$. Under these considerations, the following algorithm produces nominal phase-space trajectories of the system's center of mass in the Sagittal direction of reference:\\

\noindent\textbf{Algorithm 1.} Nominal Phase-Space Trajectory Generation.
\begin{itemize}
\item[] \textbf{Input:}
\item[] \textbf{(i)}: $\mathcal{S}_{\rm CoM} \leftarrow \{\mathcal{S}_{{\rm CoM}_q} : [\zeta_{q-1}, \zeta_q] \rightarrow  \mathbb{R}^3, \; \forall q = 1, \ldots, N\}$
\item[] \textbf{(ii)}: $x_{{\rm foot}} \leftarrow \{x_{{\rm foot}_1}, x_{{\rm foot}_2}, \ldots, x_{{\rm foot}_{N}}\}$
\item[] \textbf{(iii)}: $\dot{x}_{{\rm apex}} \leftarrow \{\dot{x}_{{\rm apex}_1}, \dot{x}_{{\rm apex}_2}, \ldots, \dot{x}_{{\rm apex}_N}\}$
\item[] \textbf{(iv)}: $\tau_y(t) \leftarrow {\bf 0}$

\item[] \textbf{Operation:} 
\item[] \textbf{(i)}: $\omega \coloneqq \{\omega_1, \omega_2, \ldots, \omega_N\} \leftarrow \mathcal{S}_{\rm CoM}$
\item[] \textbf{(ii)}: $\ddot x(t) \leftarrow {\rm PIPM} (\omega, \tau_y(t), x_{\rm foot})$
\item[] \textbf{(iii)}: $(x_{k+1}, \dot x_{k+1}) \leftarrow {\rm ODE}(x_k, \dot x_k, \ddot x_k, \epsilon)$

\textbf{Output:}
\item[] Phase-space trajectories $\mathcal{M}_{\rm CoM} := \bigcup_q^{} \mathcal{M}_{{\rm CoM}_q}$
\end{itemize}
A similar algorithm could be written to generate motion in the lateral direction. Here, $\dot x_{\rm apex}$, represents desired apex velocities, {\rm PIPM} represents the prismatic inverted pendulum model on the parametric surfaces outlined in Eq. (\ref{eq:accel}), {\rm ODE} represents the numerical integration of the ordinary differential equation associated with the model, with numerical step $\epsilon$, and $k$ represents the discretization of the output state for numerical integration purposes. Trajectories for multiple steps of a locomotion sequence on rough terrain are simulated using this process in Fig.~\ref{fig:3Dsinglecontact}.

\section{Hybrid Phase-Space Motion Planning}
\label{sec:3d-moplan}

In this section we propose a hybrid robust automaton [\cite{branicky1998unified, frazzoli2001robust, lygeros2008hybrid}] with the following key features: I) an invariant set and a recoverability set to characterize control robustness, i.e., the bundle of attractiveness, and II) a non-periodic step transition strategy based on the previously described phase-space trajectories.  The hybrid automaton governs the planner's behavior across multiple walking steps and as such constitutes the theoretical core of our proposed phase-space locomotion planning framework.

We continue our focus on Sagittal plane dynamics first, then extend the planner to all directions. For practical purposes we will use the symbol $\boldsymbol{x} = \{x, \dot x\}$ to describe the Sagittal state space associated with CoM dynamics. Note that this symbol represents now the output dynamics outlined in Eq. (\ref{eq:xi}) instead of the robot plant of Eq. (\ref{eqs:plant}). Eq. (\ref{eq:Mcom}) can thus be re-considered in the output space as $\mathcal{M}_{{\rm CoM}_q} = \left\{\boldsymbol{x} \in \mathcal{X} ~\Big|~ \sigma_q(\boldsymbol{x}) = 0 \right\}$
 where $\sigma_q$ (the normal distance) representing the deviation from the manifold $\mathcal{M}_{{\rm CoM}_q}$.

\begin{definition}[Invariant Bundle]\label{def:invariantBundle}
A set $\mathcal{B}_q(\epsilon)$ is an invariant bundle if, given $\boldsymbol{x}_{\zeta_0} \in \mathcal{B}_q(\epsilon)$, with $\zeta_0  \in \mathbb{R}_{\geq 0}$, and increment $\epsilon>0$, $\boldsymbol{x}_{\zeta}$ stays within an $\epsilon$-bounded region of $\mathcal{M}_{{\rm CoM}_q}$,
\begin{align}\label{eq:path-manifold}
  \mathcal{B}_q(\epsilon) = \left\{\boldsymbol{x} \in \mathcal{X} 
   \quad\Big|\quad \left|\sigma_q(\boldsymbol{x})\right| \le \epsilon \right\},
 \end{align}
where, $\zeta_0$ and $\zeta$ are initial and current phase progression variables, respectively. $\boldsymbol{x}_{\zeta_0}$ is an initial condition. 
\end{definition}
\noindent This type of bundle characterizes ``robust subspaces'' (i.e., ``tubes'') around nominal phase-space trajectories which guarantee that, if the state initializes within this space, it will remain on it.

\begin{definition}[Finite-Phase Recoverability Bundle]\label{def:recoverBundle}
 The invariant bundle $\mathcal{B}_q(\epsilon)$ around a phase-space manifold $\mathcal{M}_{\rm CoM_q}$ has a finite phase recoverability bundle, $\mathcal{R}_q(\epsilon,\zeta_f)  \subseteq \mathcal{X}$ defined as,
 \begin{align}
   \mathcal{R}_q(\epsilon,\zeta_f) = 
      \left\{ 
        \boldsymbol{x}_{\zeta} \in \mathcal{X}, \quad \zeta_0\le\zeta\le \zeta_f
        \quad\Big|\quad 
        \boldsymbol{x}_{\zeta_f} \in \mathcal{B}_q(\epsilon)
       \right\}.
 \end{align}
\end{definition}

\noindent Note that this bundle assumes the existence of a control policy for recoverability. We will later use these metrics to characterize robustness of our controllers. Visualization of the invariant and recoverability bundles are shown in Fig. \ref{fig:Detailed-2Half-Steps2}.

\begin{figure}
\begin{center}
 \moveFigureUp{5mm}{\includegraphics[width=0.49\textwidth]{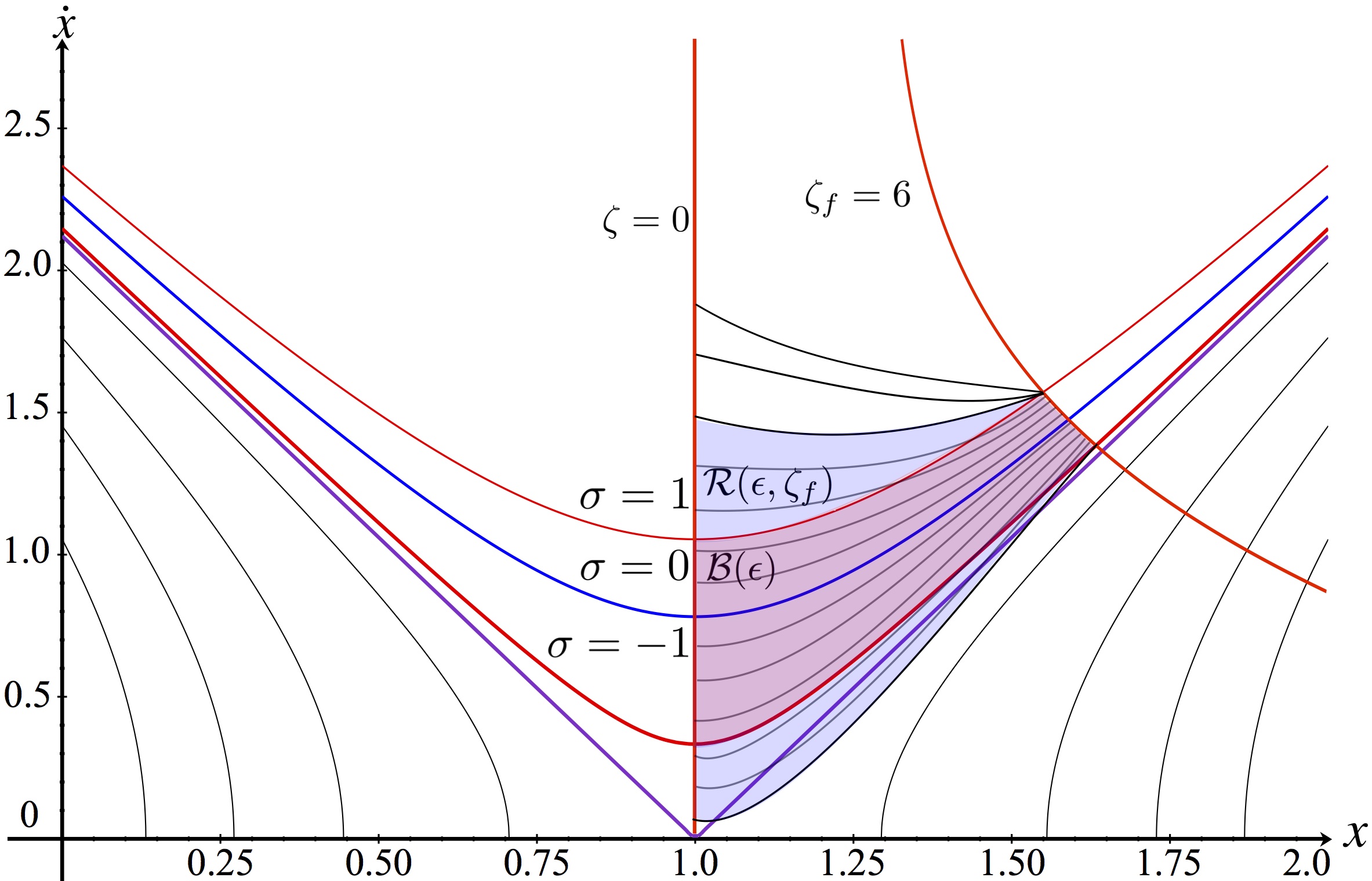}}
 \includegraphics[width=0.48\linewidth]{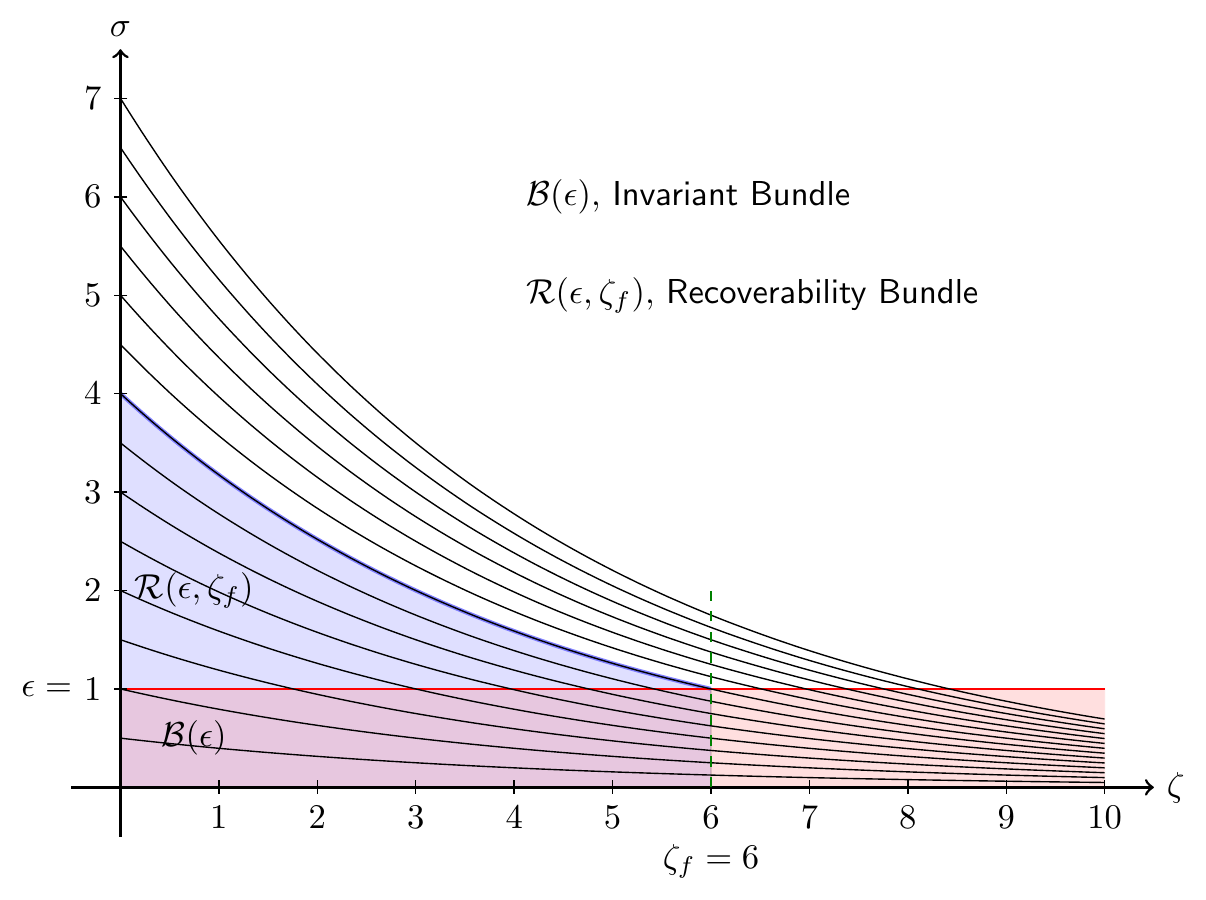}
 \caption{Mapping between Cartesian phase-space and $\zeta-\sigma$ space. The two subfigures show the invariant bundle, $\mathcal{B}(\epsilon)$ (shown in red) and the recoverability bundle, $\mathcal{R}(\epsilon,\zeta_f)$ (shown in blue) in different spaces. The left subfigure shows Cartesian phase-space while the right one shows $\zeta-\sigma$ space ($\sigma$ denotes the phase-space manifold as defined in Eq.~(\ref{eq:Mcom})). The figure on the right only shows positive bundles of $\sigma$ while the negative ones are symmetric about the $\zeta$ axis.  Since this is a Euclidean space, the manifold for a constant $\sigma$ is a horizontal line and constant values of $\zeta$ are vertical lines. If the condition when we expect the transition to occur is at $\zeta=\zeta_f$, the recoverability bundle shows the range of perturbations that can be tolerated at different $\zeta$ -- the system recovers to the invariant bundle before $\zeta_f$.}
 \label{fig:Detailed-2Half-Steps2}
\end{center}
\end{figure}

\subsection{Hybrid Locomotion Automaton}
\label{subsec:RHLA}
Legged locomotion is a naturally hybrid control system, with both continuous and discrete dynamics. We define discrete states $\mathcal{Q} = \{q_{l}, q_{r}, q_{s}\}$ representing the support of the left foot $q_{l}$ or the right foot $q_{r}$ or both $q_{s}$ (stance) feet as shown in Figs.~\ref{fig:3Dsinglecontact} and \ref{fig:Automaton}.  On each phase, the continuous dynamics are represented as shown in Eq.~(\ref{eq:dynx}) and over a domain $\mathcal{D}(q)$.  We characterize the hybrid system as a directed graph $(\mathcal{Q},\mathcal{E})$ (see Fig.~\ref{fig:Automaton}), with nodes represented by $q\in\mathcal{Q}$ and \cools{edges} represented by $\mathcal{E}(q,q+1)$, that characterize the transitions between nodes. The transitions between states can be grouped into eight classes depending on whether a vector field or variable changes discontinuously and what the trigger mechanism is.  Table~\ref{table:SwitchingGuard} shows the transition classification. 
\begin{table}[ht]
 \caption{Transition Classifications.  System vector field is $\boldsymbol{\mathcal{F}}$ as shown in Eq.~(\ref{eq:accel}).}
  \begin{center}\vspace{-5mm}
   \begin{tabular}{|c|c|c|c|} \hline
    {\bf Type}        & {\bf Transition}     & {\bf Switching} &          {\bf Jump}             \\     \hline 
    Autonomous  & $\Delta_a^{[\tau]}$ 
                & $\quad\boldsymbol{\mathcal{F}_x}^{+}(\cdot\;,\; \cdot\;, \boldsymbol{x}^{+}, \;\cdot\;,\; \cdot) \leftarrow \Delta_a^{[\delta_s]}(\boldsymbol{x}^{-})$
                & $\quad\boldsymbol{x}^{+} \leftarrow \Delta_a^{[\delta_j]}(\boldsymbol{x}^{-})$\\ 
    \hline 
    Controlled  & $\Delta_c^{[\tau]}$ 
                & $\quad\boldsymbol{\mathcal{F}_x}^{+}(\cdot\;,\; \cdot\;,\; \cdot\;,\; \boldsymbol{u_x}^{+},\; \cdot) \leftarrow \Delta_c^{[\delta_s]}(\boldsymbol{u_x}^{-})$
                & $\quad\boldsymbol{u_x}^{+} \leftarrow \Delta_c^{[\delta_j]}(\boldsymbol{u_x}^{-})$\\ 
    \hline 
    ``Timed''     & $\Delta_t^{[\tau]}$ 
                & $\quad\boldsymbol{\mathcal{F}_x}^{+}(\zeta, \;\cdot\;,\; \cdot\;,\; \cdot\;, \;\cdot\;, \;\cdot) \leftarrow \Delta_d^{[\delta_t]}(\zeta)$
                & $\quad\boldsymbol{x}^{+} \leftarrow \Delta_t^{[\delta_j]}(\zeta)$\\ 
    \hline 
    ``Disturbed''    & $\Delta_d^{[\tau]}$ 
                & $\quad\boldsymbol{\mathcal{F}_x}^{+}(\cdot\;,\; \cdot\;,\; \cdot\;, \;\cdot\;, w_d) \leftarrow \Delta_d^{[\delta_s]}(w_d)$
                & $\quad\boldsymbol{x}^{+} \leftarrow \Delta_d^{[\delta_j]}(w_d)$\\ 
    \hline 
   \end{tabular}
  \end{center}
  \label{table:SwitchingGuard}
\end{table}

\noindent The hybrid automaton state is given by: $\boldsymbol{s}=(\zeta, q, \boldsymbol{x}^T)^T$.  $\tau\in\{\delta_s,\delta_j\}$ represents the ``switching'' or ``jump'' transition types respectively.  Detailed definitions for these transitions can be found in [\cite{branicky1998unified}]. The condition that triggers the type of \cool{event} (switching or jump) is determined by a \cools{guard} $\mathcal{G}(q, q+1)$ for the particular edge $\mathcal{E}_{q,q+1}$. With this information, let us formulate a robust hybrid automaton for our locomotion planning. 
\begin{figure}[t]
 \centering
   \includegraphics[width=.7\linewidth]{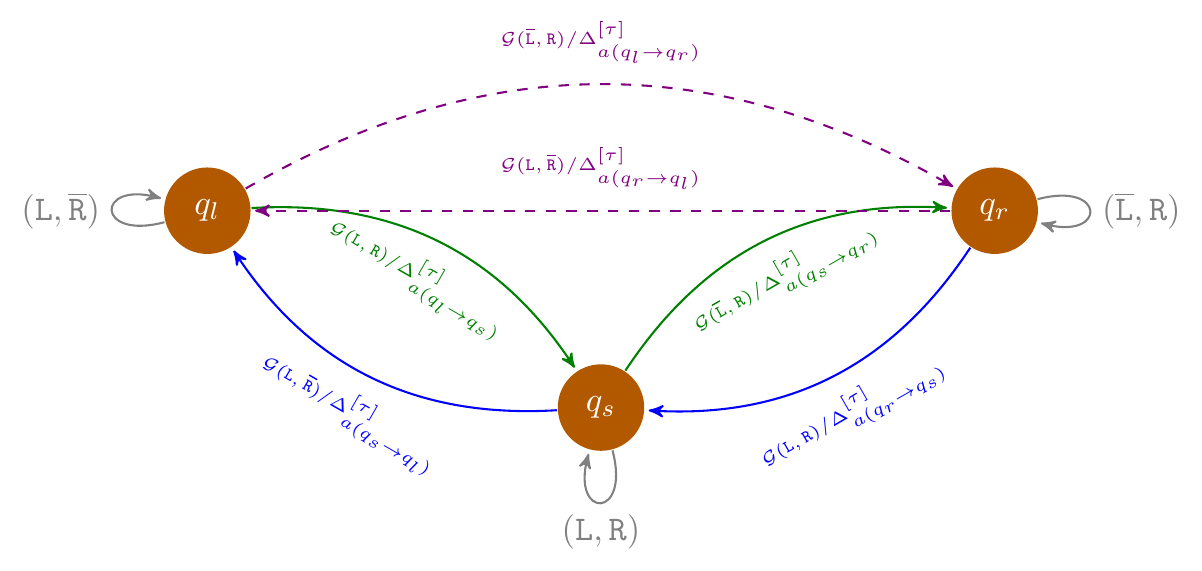}
 \caption{\captionsize This figure shows the hybrid locomotion automaton for a walking biped robot.  This automaton has three generic continuous modes, $\mathcal{Q}=\{q_l,q_s,q_r\}$, that represent when the robot is standing in the left leg only ($q_l$), standing in the right leg only ($q_r$), and when the robot is standing in both legs ($q_s$).  Shown in the edges are the condition for the transition, the \cools{guard} $\mathcal{G}(q, q+1)$ and effect of the transition in the state and outputs, the \cools{map} $\Delta_\mu^{[\tau]}$.  Note that, each mode is non-periodic.}
\label{fig:Automaton}
\end{figure}
\begin{definition}\label{def:PSRHA}
A phase-space robust hybrid automaton is a dynamical system, described by a $n$-tuple 
\begin{equation}\label{eq:PSRH}
{\rm PSRHA} \coloneqq (\zeta, \mathcal{Q},\mathcal{X}, \mathcal{U}, \mathcal{W}, \mathcal{F}, \mathcal{I},\mathcal{D}, \mathcal{R}, \mathcal{B}, \mathcal{E}, \mathcal{G}, \mathcal{T}, \Delta),
\end{equation}
\end{definition}
\noindent where, $\zeta$ is the phase-space progression variable, $\mathcal{Q}$ is the set of discrete states, $\mathcal{X}$ is the set of continuous states, $\mathcal{U}$ is the set of control inputs, $\mathcal{W}$ is the set of disturbances, $\mathcal{F}$ is the vector field, $\mathcal{I}$ is the initial condition, $\mathcal{D}$ is the domain, $\mathcal{R}$ is the collection of recoverability sets, $\mathcal{B}$ is the collection of invariant bundles, $\mathcal{E}\coloneqq \mathcal{Q}\times\mathcal{Q}$ is the edge, $\mathcal{G}: \mathcal{Q}\times\mathcal{Q}\rightarrow2^\mathcal{X}$ is the \cools{guard}, $\mathcal{T}$ is the transition termination set, and $\Delta$ is the transition map. 
\subsection{Step Transition Strategy}
\label{subsec:StepTrans}
Step transitions could be idealized as an instataneous contact (Fig.~\ref{fig:PS-stepTransition} (a)) or have a short dual-contact phase (Fig.~\ref{fig:PS-stepTransition}(b)). We first create a strategy for instantaneous contact switch then extend it to multi-contact in Appendix~\ref{sec:multicontact}.
\begin{definition}[A Phase-Space Simple Walking Step]\label{def:simple-walking-step} A $q^{\rm th}$ simple walking step is defined as a phase-space trajectory in domain $\mathcal{D}(q)$, which has two boundary guards $\mathcal{G}(q-1, q)$ and $\mathcal{G}(q, q+1)$.
\end{definition}
\noindent To characterize the non-periodic stability associated with walking in rough terrains, we define a progression map between apex states.

\begin{definition}[Stability of Non-Periodic Gaits]\label{def:snpg} We define the stability of non-periodic gaits as the progression map, $\Phi$, that takes the robot's center of mass from one desired apex state, $(\dot{x}_{{\rm apex}_q}, z_{{\rm apex}_q})$, to the next one on the phase-space manifold, and via the control input, $\boldsymbol{u}$, i.e.
\begin{equation}\label{eq:mapping}
(\dot{x}_{{\rm apex}_{q+1}}, z_{{\rm apex}_{q+1}}) = \Phi(\dot{x}_{{\rm apex}_q}, z_{{\rm apex}_q}, \boldsymbol{u_x}).
\end{equation}
\end{definition}
\begin{figure}[t]
 \centering
\includegraphics[width=0.95\textwidth]{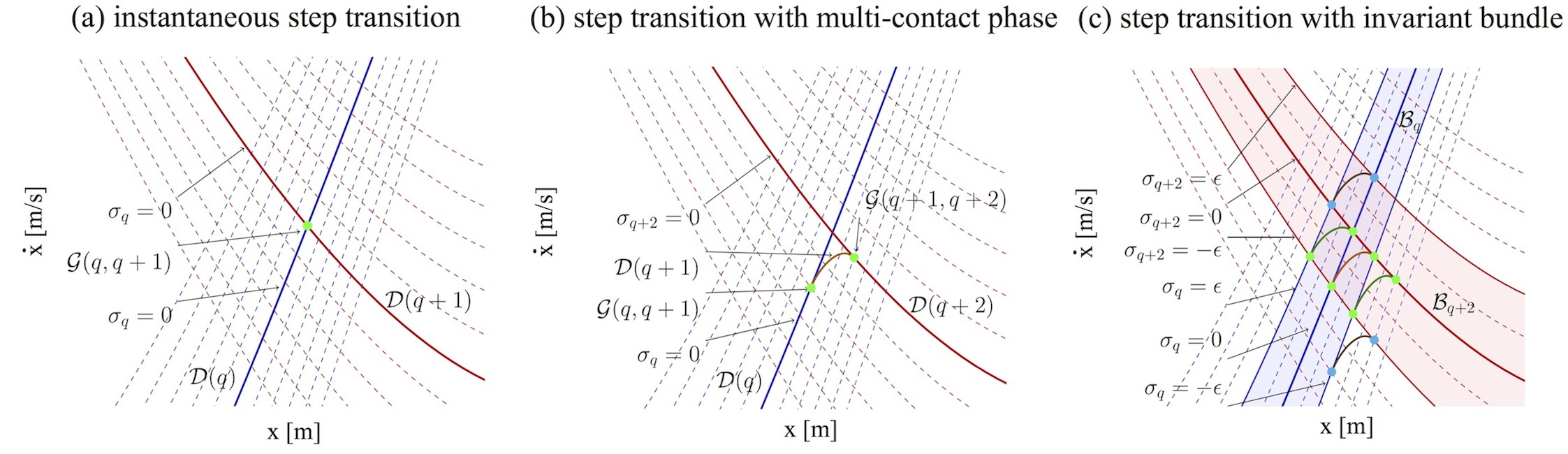}
 \caption{\captionsize Step transitions. This figure illustrates three types of step transitions in the Sagittal phase-space, associated with $\sigma$-isolines: instantaneous step transition in (a), step transition with multi-contact phase in (b) and step transition with invariant bundle in (c). (a) switches between two single contacts instantaneously while (b) has a multi-contact phase. (c) shows several guard alternatives for multi-contact transitions, from the current single-contact manifold value $\sigma_q$ to the next single-contact step bundle, $\sigma_{q+2}$.  In particular the invariant bundle bounds, $\sigma_{q} = \pm\epsilon$ are shown.  The transition bundle in green reattaches to the original manifold, $\sigma_{q+2}=0$, while the transition bundle (in brown) maintains its $\sigma$ value, i.e., $\sigma_{q+2}=\sigma_q$.}
 \label{fig:PS-stepTransition}
\end{figure}
\begin{definition}[Phase Progression Transition Value] \label{def:progVariable}
A phase progression transition value $\zeta_{\rm trans} : \mathcal{Q} \times \mathcal{X} \rightarrow \mathbb{R}_{\geq 0}$ is the value of the phase progression variable when the state $\boldsymbol{x}_q$ intersects a guard $\mathcal{G}$, i.e.,
\begin{align}
\zeta_{\rm trans} \coloneqq \inf \{\zeta > 0 \quad {\large|} \quad 
                  \boldsymbol{x}_{q} \in \mathcal{G}\}.
\end{align}
\end{definition}

\noindent We propose an algorithm to find transitions between adjacent steps, which occur at $\zeta_{\rm trans}$. We first consider the case of \emph{simple walking steps} as defined in Def.~\ref{def:simple-walking-step}. Given known step locations and apex conditions, phase-space curves can be numerically obtained using Algorithm 1. Phase-space curves in the general case have infinite slopes when crossing the zero-velocity axis. Therefore we fit NURBS (non-uniform rational B-splines)\footnote{Different from polynomials, non-rational splines or B\'ezier curves, NURBS can be used to precisely represent conics and circular arcs by adding weights to control points.} to the manifolds of the generated data (see Fig. \ref{fig:PSMSurf} for an illustration of adjacent step manifolds). Subsequently, finding step transitions just consists on finding the root difference between adjacent NURBS, which reduces to a simple polynomial root-finding problem. The pipeline to find step intersections is shown below.
\begin{center}
   \includegraphics[width=0.8\textwidth]{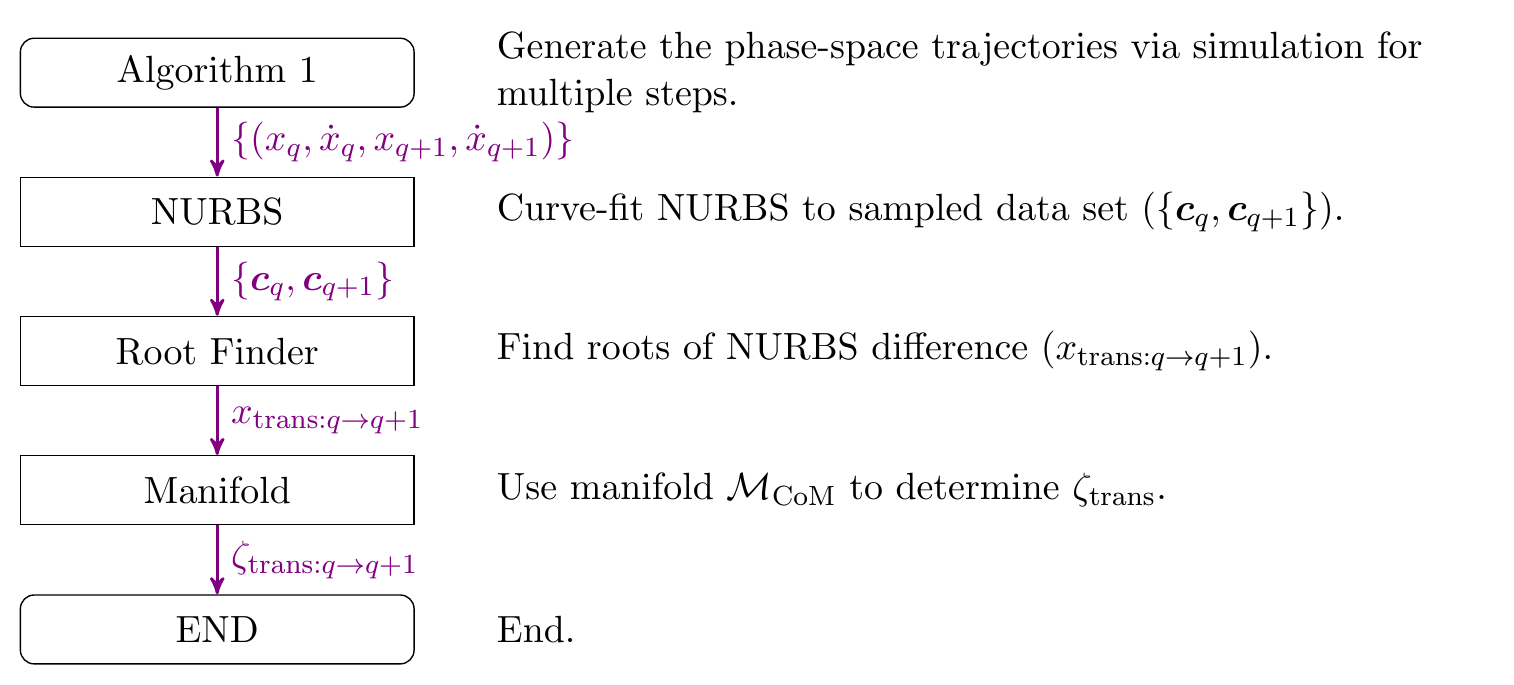}
 \end{center}
\subsection{Lateral Foot Placement Algorithm}
\label{subsec:searchlateral}
To complete the 3D walking planner, we formulate a searching strategy for lateral foot placement that complies with the timing of Sagittal step transitions. The main objective of the lateral dynamics is to return the robot's center of mass to a walking center through a semi-periodic cycle. If lateral foot placements are not adequately picked, the lateral behavior will drift away or become unstable. According to Eq.~(\ref{eq:accel}), lateral center of mass dynamics are equal to
\begin{equation}\label{eq:dyny}
\boldsymbol{\dot y} = \boldsymbol{\mathcal{F}_y}(\boldsymbol{y}, \boldsymbol{u_y})=
\begin{pmatrix}
\dot y\\
\omega^2 (y - y_{\rm foot}) - \frac{ \omega^2}{mg}\tau_y
\end{pmatrix},
\end{equation}
which can be numerically simulated adapting Algorithm 1 to the lateral dynamics (see Fig.~\ref{fig:Numerical_Integration} for simulations of lateral dynamics).
\setcounter{algorithm}{1}
\begin{algorithm}
\begin{algorithmic}[1]
\setstretch{1.3}
\STATE Initialize iteration index $n \leftarrow 1$, maximum iterations $n_{\rm max}$, tolerance $\dot{y}_{\rm tol}$ and initial state $y_{\rm init}, \hat{y}_{\rm foot}(1)$
\STATE  $\dot{y}_{\rm apex}(1) \leftarrow$ integration of inverted pendulum model given in Eq.~(\ref{eq:dyny}) with $\hat{y}_{\rm foot}(1)$
\WHILE {$n < n_{\rm max}$ and $|\dot{y}_{\rm apex}(n)| >\dot{y}_{\rm tol}$} 
\STATE $\quad \hat{y}_{\rm foot}(n+1) = \hat{y}_{\rm foot}(n) - \dot{y}_{\rm apex}(n)/\ddot{y}_{\rm apex}(n)$ by Newton-Raphson method
\STATE $\quad \dot{y}_{\rm apex}(n+1) \leftarrow$ integration of inverted pendulum in Eq.~(\ref{eq:dyny}) with $\hat{y}_{\rm foot}(n+1)$
\STATE $\quad \ddot{y}_{\rm apex}(n+1) = (\dot{y}_{\rm apex}(n+1) - \dot{y}_{\rm apex}(n))/(\hat{y}_{\rm foot}(n+1) - \hat{y}_{\rm foot}(n))$
\STATE $\quad n \leftarrow n +1$
\ENDWHILE
\end{algorithmic}
\caption{Newton-Raphson Search for Lateral Foot Placement}
\label{al:newtonraphson}
\end{algorithm}
To generate bounded lateral trajectories, we choose the simple criterion of achieving zero apex lateral velocity, $\dot{y}_{\rm apex} = 0$ at the instant when the CoM lateral apex position, $y_{\rm apex}$ is located between the two feet. Here $y_{\rm apex}$ and $\dot{y}_{\rm apex}$ are the lateral center of mass position and velocity when the center of mass crosses the Sagittal apex as defined in Def. \ref{def:sagapex}. Algorithm \ref{al:newtonraphson} achieves this objective. In this algorithm, $\hat{y}_{\rm foot}(n)$ represents the estimated lateral foot placements in the $n^{\rm th}$ search iteration. A foot placement range constraint $\hat{y}_{\rm foot, {\rm min}} \leq \hat{y}_{\rm foot} \leq \hat{y}_{\rm foot, {\rm max}}$ and the maximum iteration step constraint $n < n_{\rm max}$ are also provided. Examples of usage are given in Fig.~\ref{fig:Numerical_Integration}.

\begin{figure*}
 \centering
   \includegraphics[width=.95\linewidth]{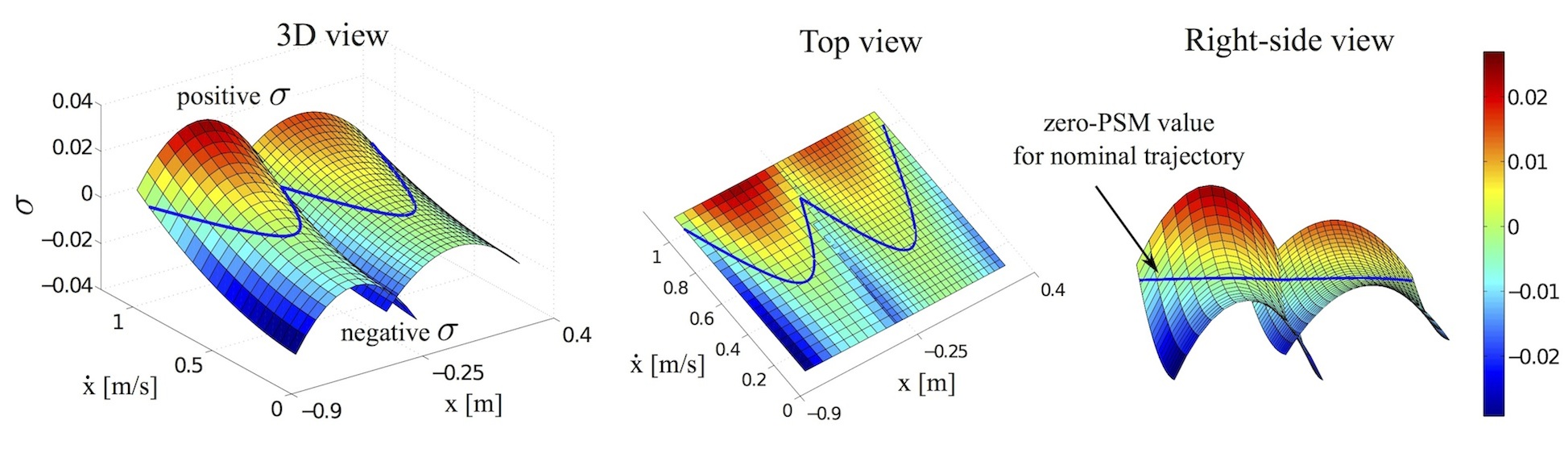}
 \caption{\captionsize Phase-space manifold isolines. This three-dimensional space demonstrates our phase-space manifold isolines defined in Eq.~(\ref{eq:surface1}) by the color map. The horizontal plane represents the Sagittal phase-space while the vertical third dimension represents the non-zero $\sigma$ value in Eq.~(\ref{eq:surface1}). As we can see, the blue nominal trajectory has a zero $\sigma$ value. The phase space region above the nominal trajectory has positive $\sigma$ values while the lower region has negative $\sigma$ values.}
 \label{fig:PSMSurf}
\end{figure*}

 \begin{figure}[t]
  \begin{center}
    \includegraphics[width=0.45\linewidth,
                     height=0.31\linewidth]{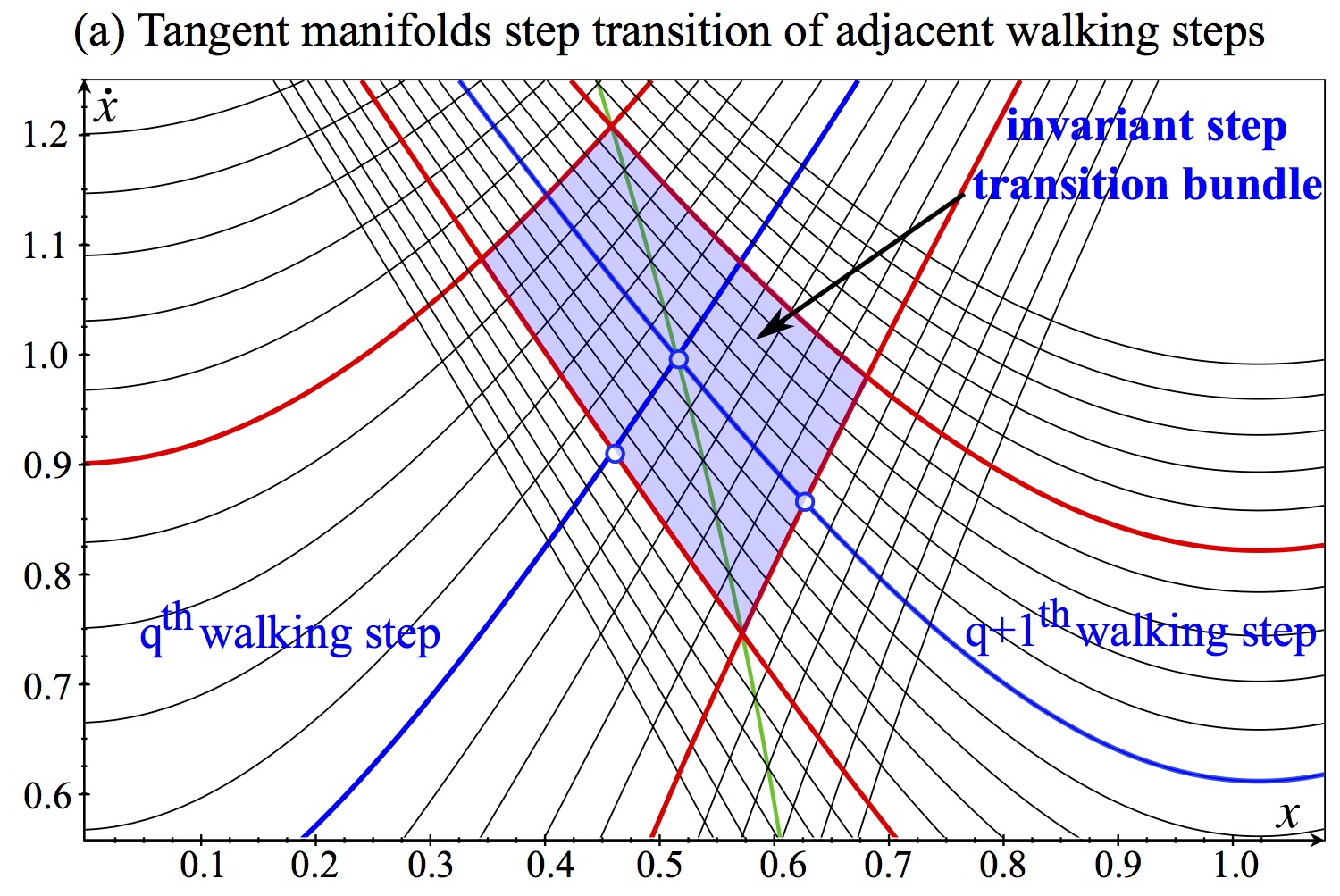}
    \includegraphics[width=0.435\linewidth]{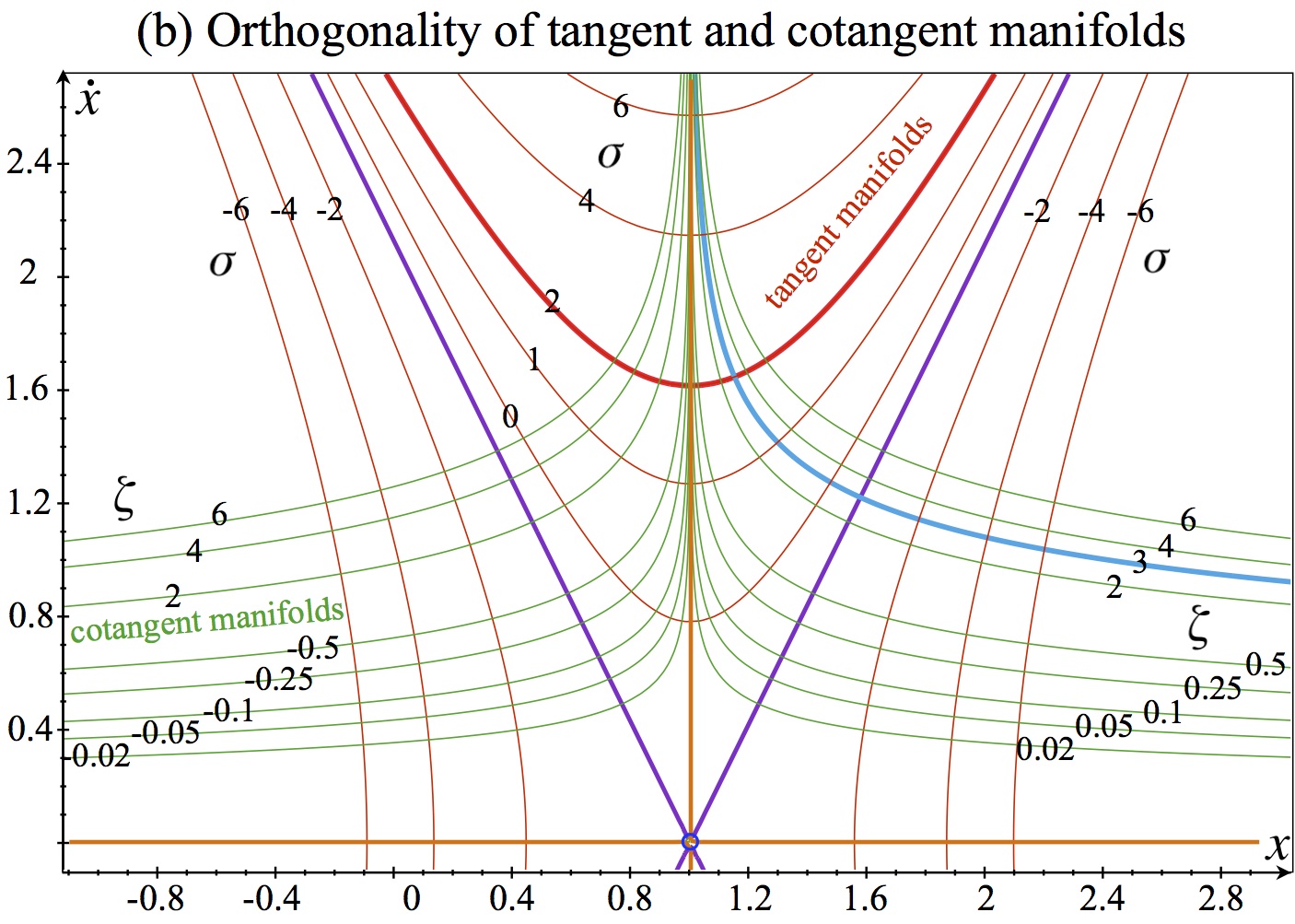}
 \caption{The left subfigure shows the tangent manifolds for two simple walking steps.  Nominal phase space manifolds are shown in blue color.  Their intersection corresponds to $\zeta_{\rm trans}$ and the guard, $\mathcal{G}_{q \rightarrow q+1}$.  Shown in red are manifolds of $\sigma=\pm \epsilon$.  For the current $q^{\rm th}$ step, we can use the $-\epsilon$ manifold of the next bundle as the guard $\mathcal{G}_{q\rightarrow q+1}=\{(x, \dot{x}) \;\big|\; \sigma_{q+1}=-\epsilon\}$.  The subfigure on the right shows the orthogonal tangent and cotangent manifolds (\textit{isoparametric curves}) in phase-space. The tangent manifold (red lines) are shown as lines of constant $\sigma$ as defined in Eq.~(\ref{eq:simplifiedPSM}). Thick lines in purple are the asymptotes of tangent manifolds and the thick red line represents the specific manifold with $\sigma=2$. The asymptotes intercept the $(x_{\rm foot}, 0)$ saddle point, where $x_{\rm foot}$ is the Sagittal foot position.  The point noted with a circle is the origin of the $(\zeta,\sigma)$ (transformed) space. The cotangent manifold (green lines) are lines of constant $\zeta$ that are orthogonal at every point to the constant $\sigma$-manifolds.  Thick lines in orange are the asymptotes of cotangent manifolds and the thick cyan line represents a specific manifold with $\zeta=3$.  The vertical asymptote represents the manifold with $\zeta=0$.}
   \label{fig:Detailed-2Half-Steps1}
  \end{center}
 \end{figure}

\section{Phase-Space Manifold Analysis}
\label{sec:manifold}
In this section, we focus on formulating a metric to measure deviations from the phase-space manifolds of planned trajectories derived in the previous section. A phase-space sensitivity norm is also formulated to study the effect of disturbances. Various disturbance patterns and suggested recovery strategies are considered. 
\begin{proposition}[Phase-Space Tangent Manifold]\label{theorem:PSM}
Given the prismatic inverted pendulum dynamics of Eq. (\ref{eq:dynx}) with an initial condition $(x_0, \dot{x}_0)$ and foot placement $x_{\rm foot}$, the phase-space tangent manifold is
\begin{align}\label{eq:surface1}
\sigma =  \; (x_0 - x_{\rm foot}) ^2\Big(2\dot{x}^2_0 - \dot{x}^2 + \omega^2 (x - x_0) (x + x_0 - 2x_{\rm foot})\Big)
- \dot{x}^2_0 (x - x_{\rm foot})^2 + \dot{x}^2_0 (\dot{x}^2 - \dot{x}^2_0)/\omega^2,
\end{align}
where the condition $\sigma = 0$ is equivalent to the nominal phase-space manifold. Furthermore, $\sigma$ represents the Riemanniam distance to the estimated locomotion phase-space trajectories.
\end{proposition}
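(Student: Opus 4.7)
The plan is to exploit the fact that the Sagittal sub-system $\ddot{x} = \omega^{2}(x - x_{\rm foot})$, obtained from Eq.~(\ref{eq:dynx}) under the nominal choice $\tau_{y}=0$, admits an explicit first integral, and then to show by direct algebraic manipulation that the stated $\sigma$ reduces to a constant (initial-condition dependent) multiple of the defect from that conservation law.

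First I would multiply the equation of motion by $\dot{x}$ and integrate once to obtain the Hamiltonian-like invariant
\[
E(x,\dot{x}) \;=\; \dot{x}^{2} - \omega^{2}(x - x_{\rm foot})^{2} \;=\; E(x_{0},\dot{x}_{0}).
\]
Using the identity $(x-x_{\rm foot})^{2}-(x_{0}-x_{\rm foot})^{2} = (x-x_{0})(x+x_{0}-2x_{\rm foot})$, the nominal trajectory through $(x_{0},\dot{x}_{0})$ is exactly the zero level set of
\[
C(x,\dot{x}) \;:=\; \dot{x}_{0}^{2} - \dot{x}^{2} + \omega^{2}(x-x_{0})(x+x_{0}-2x_{\rm foot}),
\]
so $C=0$ characterizes the nominal phase-space manifold to be matched.

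Next I would rewrite the bracket in the first term of the claimed $\sigma$ as $\dot{x}_{0}^{2}+C$, group the remaining two terms, and substitute $(x-x_{0})(x+x_{0}-2x_{\rm foot}) = \bigl(C+(\dot{x}^{2}-\dot{x}_{0}^{2})\bigr)/\omega^{2}$ to eliminate the mixed polynomial in $(x,\dot{x})$. A telescoping cancellation of the $\dot{x}^{2}-\dot{x}_{0}^{2}$ contributions leaves
\[
\sigma \;=\; \Bigl[(x_{0}-x_{\rm foot})^{2} - \dot{x}_{0}^{2}/\omega^{2}\Bigr]\,C(x,\dot{x}) \;=\; -\frac{E(x_{0},\dot{x}_{0})}{\omega^{2}}\,C(x,\dot{x}).
\]
Because the prefactor in brackets depends only on the initial data, $\sigma=0$ is equivalent to $C=0$, i.e.\ to lying on the nominal phase-space trajectory; the degenerate case where the prefactor vanishes corresponds precisely to the saddle asymptote $\dot{x}_{0}=\pm\omega(x_{0}-x_{\rm foot})$ visible in Fig.~\ref{fig:Detailed-2Half-Steps1} and is ruled out by the planner.

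For the Riemannian-distance claim I would observe that $\sigma$ is smooth with non-vanishing gradient on a neighborhood of $\mathcal{M}_{{\rm CoM}_{q}}$ away from the saddle, so $|\sigma|/\|\nabla\sigma\|$ coincides to leading order with the Euclidean phase-space distance to the manifold and inherits a natural sign convention (positive above, negative below the nominal trajectory, as in Fig.~\ref{fig:PSMSurf}); this is the sense in which $\sigma$ is later used as a signed deviation throughout Section~\ref{sec:manifold}. The main obstacle I expect is purely the bookkeeping of the collapse $\sigma \mapsto [(x_{0}-x_{\rm foot})^{2}-\dot{x}_{0}^{2}/\omega^{2}]\,C$: the three polynomial groups in the stated expression have different prefactors in $\omega^{2}$, $\dot{x}_{0}^{2}$, and $(x_{0}-x_{\rm foot})^{2}$, and the cancellation only manifests after the $A = (C+B)/\omega^{2}$ substitution is applied consistently, so careful sign-tracking is the one place where the calculation can go wrong.
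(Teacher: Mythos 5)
Your proof is correct, and your factorization checks out exactly: writing $X=x-x_{\rm foot}$, $X_0=x_0-x_{\rm foot}$, one verifies that the right-hand side of Eq.~(\ref{eq:surface1}) equals $\bigl(X_0^2-\dot{x}_0^2/\omega^2\bigr)\,C(x,\dot{x})$ with $C=\dot{x}_0^2-\dot{x}^2+\omega^2(X^2-X_0^2)$, i.e.\ $\sigma$ is the energy defect $-\bigl(E(x,\dot{x})-E(x_0,\dot{x}_0)\bigr)$ scaled by the constant $-E(x_0,\dot{x}_0)/\omega^2$. However, your route is genuinely different from the paper's. The paper (Appendix~\ref{sec:PSMDerivation}) solves the linearized dynamics explicitly in time via a Laplace transform, obtains the $\cosh/\sinh$ parametrization, inverts the $2\times2$ coefficient matrix to express $\cosh(\omega t)$ and $\sinh(\omega t)$ in terms of $(x-x_{\rm foot},\dot{x})$, and eliminates $t$ through $\cosh^2-\sinh^2=1$; the determinant of that matrix is $\omega\bigl(X_0^2-\dot{x}_0^2/\omega^2\bigr)$, which is precisely your prefactor, so the two derivations degenerate on the same saddle asymptote --- a case you flag explicitly and the paper leaves implicit. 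What the paper's approach buys is the closed-form time solution itself, which it reuses elsewhere; what your first-integral approach buys is brevity, a structural explanation of why $\sigma$ is (up to an initial-condition-dependent constant) just the conserved quantity $\dot{x}^2-\omega^2(x-x_{\rm foot})^2$ offset by its initial value, and an immediate one-line recovery of the apex form Eq.~(\ref{eq:simplifiedPSM}) by setting $X_0=0$. The one caveat common to both arguments is that the zero level set of the conserved quantity is the full hyperbola, of which the nominal trajectory is a single branch; neither you nor the paper distinguishes the branches, so this is not a gap relative to the paper. Your remark on the Riemannian-distance interpretation via $|\sigma|/\|\nabla\sigma\|$ is likewise no weaker than the paper, which asserts that property without proof.
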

\begin{proof}
Given in the Appendix~\ref{sec:PSMDerivation}.
\end{proof}
\noindent If we use the apex conditions as initial values, i.e. $(x_0, \dot{x}_0) = (x_{\rm foot}, \dot{x}_{\rm apex})$, the tangent manifold becomes
\begin{align}\label{eq:simplifiedPSM}
\sigma(x, \dot{x}, \dot{x}_{\rm apex}, x_{\rm foot}) = \dfrac{\dot{x}^2_{\rm apex}}{\omega^2} \Big(\dot{x}^2 - \dot{x}^2_{\rm apex} 
                                            - \omega^2(x - x_{\rm foot})^2\Big).
\end{align}
Since the tangent manifold is considered as a trajectory deviation metric in the phase-space, we will use it in the next section as a feedback control parameter to ensure robustness. Fig.~\ref{fig:PSMSurf} provides an illustration of the value of $\sigma$ as a function of the state. The same type of analysis can be done for lateral trajectory deviations using the lateral pendulum dynamics of Eq. (\ref{eq:dyny}).

\begin{figure*}[t]
 \centering
   \includegraphics[width=\linewidth]
   {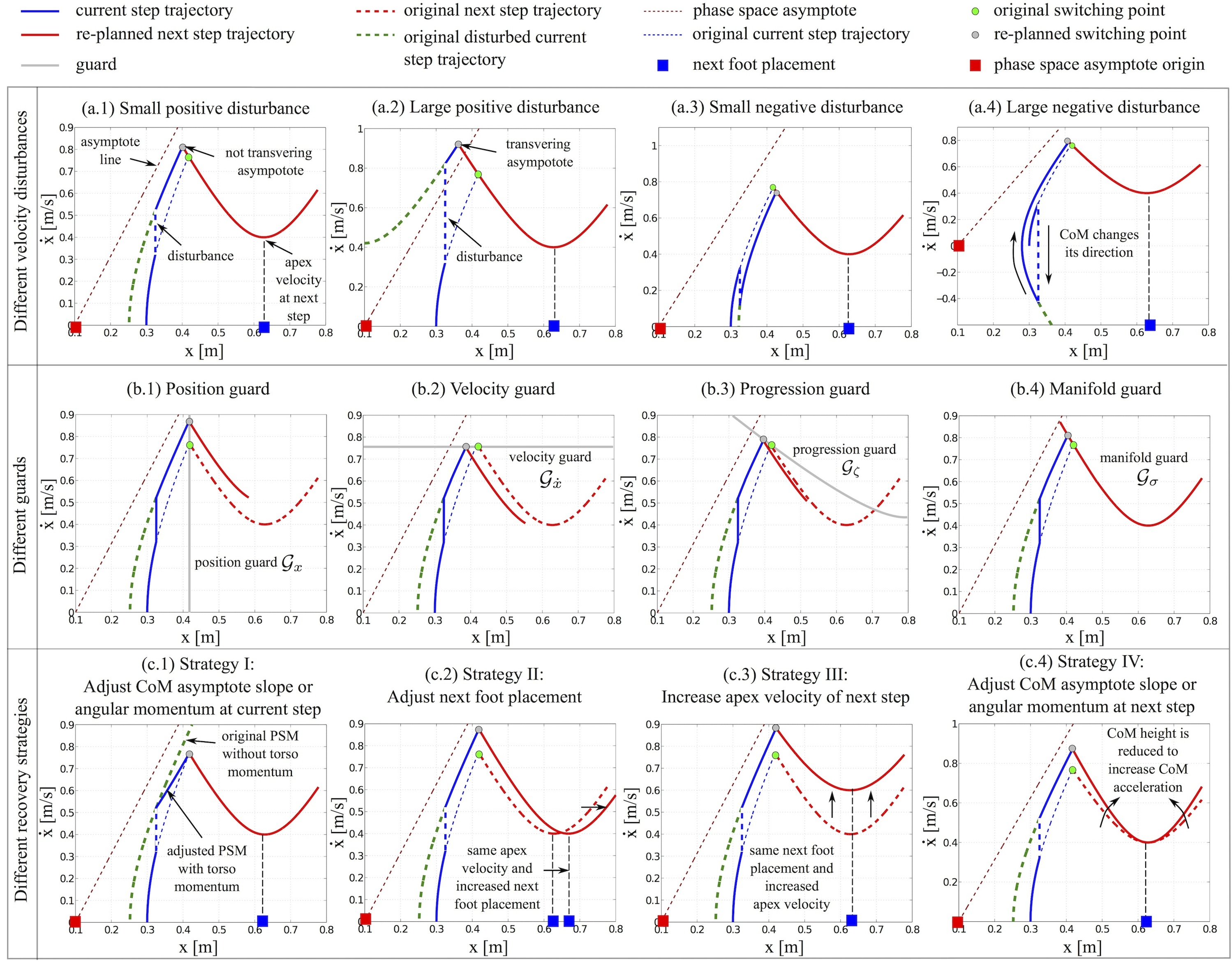}
 \caption{\captionsize Disturbance pattern, guard and recovery strategy classification. Four different velocity disturbance cases are shown in subfigures (a.1)-(a.4). The second row shows four proposed guards for the next step transition while the last row provides four recovery strategies.}
 \label{fig:VelocityDisturbSet1}
\end{figure*}

\begin{proposition}[Phase-Space Cotangent Manifold]\label{prop:PSCoM}
Given the pendulum system of Eq.~(\ref{eq:dynx}), the cotangent manifold is equal to
\begin{align}\label{eq:zeta_manifold}
\zeta  = \zeta_0(\dfrac{\dot{x}}{\dot{x}_0})^{\omega^2} \dfrac{x - x_{\rm foot}}{x_0 - x_{\rm foot}},
\end{align}
and represents the arc length along the tangent manifold of Eq.~(\ref{eq:simplifiedPSM}), with an initial value $\zeta_0$.
\end{proposition}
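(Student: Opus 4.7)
\medskip

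\noindent\textbf{Proof proposal.} The plan is to exploit the characterization suggested in the caption of Figure~\ref{fig:Detailed-2Half-Steps1}: the cotangent manifold is the family of \emph{isoparametric curves orthogonal} to the tangent manifold at every phase-space point. In two dimensions, requiring that the level sets of $\zeta$ be orthogonal to the level sets of $\sigma$ is equivalent to requiring that the Euclidean gradients be orthogonal, $\nabla\sigma\cdot\nabla\zeta=0$. So the proof reduces to deriving a first-order linear PDE for $\zeta$ from $\sigma$ and exhibiting the announced closed form as its solution with the appropriate boundary condition $\zeta(x_0,\dot x_0)=\zeta_0$.

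First I would differentiate the simplified tangent-manifold expression (\ref{eq:simplifiedPSM}) to get
\begin{equation}
\nabla\sigma \;=\; \bigl(\partial_x\sigma,\;\partial_{\dot x}\sigma\bigr)
\;=\; \Bigl(-2\,\dot x^{2}_{\rm apex}(x-x_{\rm foot}),\;\tfrac{2\,\dot x^{2}_{\rm apex}\,\dot x}{\omega^{2}}\Bigr). \nonumber
\end{equation}
Substituting this into $\nabla\sigma\cdot\nabla\zeta=0$, cancelling the common factor $2\dot x^{2}_{\rm apex}$, and rearranging yields the scalar PDE
\begin{equation}
\omega^{2}\,(x-x_{\rm foot})\,\partial_x\zeta \;=\; \dot x\,\partial_{\dot x}\zeta. \nonumber
\end{equation}
This equation is separable in the natural variables $(x-x_{\rm foot},\dot x)$: a multiplicative ansatz $\zeta=f(x)\,g(\dot x)$ immediately forces $f(x)\propto(x-x_{\rm foot})$ and $g(\dot x)\propto \dot x^{\omega^{2}}$, up to a constant. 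The method of characteristics gives the same conclusion: the characteristic curves satisfy $d\dot x/\dot x=\omega^{2}\,d(x-x_{\rm foot})/(x-x_{\rm foot})$, so $(x-x_{\rm foot})\,\dot x^{-\omega^{2}}$ is a first integral, and any function of it (in particular its reciprocal, up to scaling) is a valid $\zeta$. Fixing the proportionality constant so that $\zeta(x_0,\dot x_0)=\zeta_0$ gives exactly Eq.~(\ref{eq:zeta_manifold}).

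For the arc-length statement, the idea is to restrict attention to a nominal tangent manifold (a level set $\sigma=\mathrm{const}$), pull back the Riemannian line element used to measure $\sigma$ in Proposition~\ref{theorem:PSM}, and verify that $\zeta$ grows monotonically along the flow; the initial value $\zeta_0$ then plays the role of the base point of the parametrization. Concretely, along an integral curve of Eq.~(\ref{eq:dynx}) with $\tau_y=0$, one can solve $\ddot x=\omega^{2}(x-x_{\rm foot})$ explicitly to obtain hyperbolic expressions for $x-x_{\rm foot}$ and $\dot x$ in the progression variable, substitute them into (\ref{eq:zeta_manifold}), and check that $\zeta$ is a strictly increasing reparametrization of the trajectory whose differential matches the metric induced by $\sigma$ on the manifold.

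The main obstacle is not solving the PDE, which is essentially routine once the orthogonality condition is written down, but rather the arc-length claim: one must identify the correct Riemannian metric on the tangent manifold (the one implicitly induced by $\sigma$ rather than the ambient Euclidean metric) and verify that the explicit solution we found is, up to the normalization $\zeta_0$, a unit-speed parametrization in that metric. Once this metric is specified, the verification reduces to a direct derivative computation using the hyperbolic parametrization of the single-contact PIPM trajectory.
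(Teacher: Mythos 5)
Your proposal follows essentially the same route as the paper's Appendix~\ref{sec:OrthogonalManifoldDerivation}: both compute $\nabla\sigma$ from Eq.~(\ref{eq:simplifiedPSM}), impose orthogonality of the $\zeta$- and $\sigma$-level sets (the paper via the orthogonal-trajectory ODE $d\dot{x}/dx=-\dot{x}/(\omega^2(x-x_{\rm foot}))$, you via the equivalent condition $\nabla\sigma\cdot\nabla\zeta=0$), and integrate the resulting separable equation to reach Eq.~(\ref{eq:zeta_manifold}) with the normalization $\zeta(x_0,\dot{x}_0)=\zeta_0$. One minor slip: the characteristic relation you quote, $d\dot{x}/\dot{x}=\omega^{2}\,d(x-x_{\rm foot})/(x-x_{\rm foot})$, should read $\omega^{2}\,d\dot{x}/\dot{x}=-dx/(x-x_{\rm foot})$, whose first integral $\dot{x}^{\omega^{2}}(x-x_{\rm foot})$ is exactly the claimed $\zeta$ up to scaling (rather than the reciprocal of $(x-x_{\rm foot})\dot{x}^{-\omega^{2}}$); this is harmless since your separable ansatz already yields the correct answer, and the arc-length interpretation you flag as the main obstacle is likewise asserted but not verified in the paper's own proof.
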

\begin{proof}
Given in the Appendix~\ref{sec:OrthogonalManifoldDerivation}.
\end{proof}
\noindent Illustration of the tangent manifold and cotangent manifold are given in Fig. \ref{fig:Detailed-2Half-Steps1}.
\noindent In robust control theory [\cite{zhou1996robust}], input-output system response can be evaluated through the use of system norms. In this spirit, we define a new norm that characterizes sensitivity to disturbances of our non-periodic gaits, as
\begin{definition}[Phase-Space Sensitivity Norm]\label{def:sensitivityNorm}
Given a disturbance $d$, the phase-space sensitivity norm is defined as
\begin{align}
\kappa\Big(\sigma(x_{\zeta_d}, \dot{x}_{\zeta_d})\Big) = \Big(\dfrac{1}{\zeta_{\rm trans} - \zeta_d} \int_{\zeta_d}^{\zeta_{\rm trans}}\sigma(x_\zeta, \dot{x}_\zeta)^2 d \zeta\Big)^{1/2},
\end{align}
where, $\zeta_d$ corresponds to the phase value when a disturbance occurs and $\zeta_{\rm trans}$ is the phase transition defined in Def.~\ref{def:progVariable} for a given step. 
\end{definition}
\noindent Contrary to other sensitivity norms [\cite{hobbelen2007disturbance, hamed2015robust}], our gait norm evaluates disturbance sensitivity for non-periodic gaits. It does so by explicitly accounting for disturbance magnitude and for the instant where disturbances occur. And it does not rely on approximate linearization nor Taylor series expansion as the previous periodic gait norms require. We will use the proposed norm in the control section for dynamic programming. 

Let us consider various types of disturbances and corresponding recovery strategies. Disturbances can be categorized based on four characteristics: (1) the disturbance direction, (2) the disturbance magnitude, (3) the terminal asymptote-region, and (4) the change of the motion direction.  Fig.~\ref{fig:VelocityDisturbSet1} (a.1)-(a.4) illustrates those four scenarios, respectively. Here, we assume that the disturbances are impulses that change the velocity instantly\footnote{The disturbance could also be impulses changing the position or a combination of both position and velocity. In any case, the proposed disturbance characteristics and recovery strategies are still valid.}.   (a.2) has a larger positive disturbance than (a.1) such that velocity after the disturbed trajectory crosses the asymptote of the inverted pendulum model. On the other hand, (a.3) has a smaller negative disturbance such that velocity after disturbance keeps the same direction while (a.4) does not. More disturbance scenarios could be defined, depending on specific phase-space state locations and disturbance characteristics. Given these disturbed phase-space trajectories, new step transition strategies need to be considered. Here we propose four types of guard strategies for next step transition in Fig.~\ref{fig:VelocityDisturbSet1} (b.1)-(b.4). The guards shown are: position guard $\mathcal{G}_x$ (vertical line), velocity guard $\mathcal{G}_{\dot{x}}$ (horizontal line), progression guard $\mathcal{G}_\zeta$ ($\zeta$-isoline), and manifold guard $\mathcal{G}_\sigma$ ($\sigma$-isoline).  We find each guard such that they have the same transition point for the nominal phase space manifold (PSM). Although this guard recovering strategy forces the motion to be adjusted, it might not be sufficient. If that is the case, we consider designing more recovery strategies by properly using control inputs. In the last four subfigures of Fig.~\ref{fig:VelocityDisturbSet1}, four recovery strategies are illustrated. These strategies are inspired by observations of human walking behaviors [\cite{hofmann2006robust, kuo1992human, abdallah2005biomechanically}] and by the experiences gained during extensive simulations. In order to fulfill those recovery strategies, a proper control policy will need to be designed.

\section{Hybrid Control Strategy under Disturbances}
\label{sec:optimization}

This section formulates a two-stage control procedure to recover from disturbances.  When a disturbance occurs, the robot's CoM deviates from the planned phase-space manifolds obtained via Algorithm 1.  We use dynamic programming to find an optimal policy of the continuous control variables for recovery, and, when necessary, we re-plan feet placements from their initial locations based on the guards defined in Fig.~\ref{fig:VelocityDisturbSet1}. Our proposed controller, relies on the distance metric of Eq.~(\ref{eq:simplifiedPSM}) to steer the robot current's trajectory to the planned manifolds.  

\subsection{Dynamic Programming-Based Optimal Control}
\label{subsec:DP}
This subsection shows a proposed dynamic programming-based controller for the continuous control of the Sagittal locomotion dynamics.  A similar controller can be formulated for the lateral CoM behavior. To robustly track the planned CoM manifolds, we minimize a finite phase quadratic cost function and solve for the continuous control parameters, i.e.
\begin{equation}\label{eq:optimization-1}
\begin{aligned}
&\underset{\boldsymbol{u}_{\boldsymbol{x}}^c}{\text{min}} \;\; \mathcal{V}_N(q, \;\boldsymbol{x}_N) + \sum_{n=0}^{N-1} \eta^n \mathcal{L}_n(q, \boldsymbol{x}_n, \boldsymbol{u}^c_{\boldsymbol{x}})& \\
&\textrm{subject to}:\; \boldsymbol{\dot x} = \boldsymbol{\mathcal{F}_x}(\boldsymbol{x}, \boldsymbol{u}^c_{\boldsymbol{x}}, d), \\
&\hspace{0.7in}\omega^{\rm min} \leq \omega \leq \omega^{\rm max}, \\
&\hspace{0.7in} \tau^{\rm min}_y \leq \tau_y \leq \tau^{\rm max}_y,
\end{aligned}
\end{equation}
where $\boldsymbol{u}_{\boldsymbol{x}}^c = \{\omega, \tau_y\}$ corresponds to the continuous variables of the hybrid control input $\boldsymbol{u_x}$ of Eq. (\ref{eq:dynx}), $0 \leq \eta \leq 1$ is a discount factor, $N$ is the number of discretized stages until the next step transition, $\zeta_{\rm trans}$, the terminal cost is, $\mathcal{V}_N(q, \boldsymbol{x}_N) = \alpha (\dot{x}(\zeta_{\rm trans}) - \dot{x}(\zeta_{\rm trans})^{\rm nom})^2$. Here, $\dot{x}(\zeta_{\rm trans})$ is the disturbed terminal velocity which happens to be chosen at the instant of the next step transition, and $\dot{x}(\zeta_{\rm trans})^{\rm nom}$ is the nominal transition velocity. The first equality constraint $\boldsymbol{\mathcal{F}_x}(\cdot)$ is defined by the PIPM dynamics of Eq. (\ref{eq:dynx}) with an extra input disturbance $d$. Additionally, $\mathcal{L}_n$ is the one step cost-to-go function at the $n^{\rm th}$ stage defined as
\begin{align} \label{eq:L_s}
\mathcal{L}_n(q, \boldsymbol{x}_n, \boldsymbol{u}^c_{\boldsymbol{x}}) = & \int_{\zeta_n}^{\zeta_{n+1}} \big[\beta \sigma ^2 + \Gamma_1 \tau_y^2 + \Gamma_2 (\omega - \omega^{\rm ref})^2 \big] d\zeta,
\end{align}
where, $\sigma$ is the tangent manifold of Eq.~(\ref{eq:surface1}) used as a feedback control parameter, $\zeta_n$ and $\zeta_{n+1}$ are the starting and ending phase progressions for the $n^{\rm th}$ stage, $\alpha$, $\beta$, $\Gamma_1$ and $\Gamma_2$ are weights, and $\omega^{\rm ref}$ is the reference phase space asymptote slope given in Algorithm 1. To avoid chattering effects in the neighborhood of the planned manifold, a boundary layer is defined and used to saturate the controls, i.e.
\begin{subequations}
\label{eq:slidingcontrol}
\begin{empheq}[left={\boldsymbol{u}^{c'}_{\boldsymbol{x}} = }\empheqlbrace]{align}\label{eq:slidingcontrol_a}
  & \boldsymbol{u}^c_{\boldsymbol{x}} & |\sigma| > \epsilon \\\label{eq:slidingcontrol_b}
  & \dfrac{|\sigma|}{\epsilon}\boldsymbol{u}^{c, \epsilon}_{\boldsymbol{x}} + \dfrac{\epsilon - |\sigma|}{\epsilon}\boldsymbol{u_x}^{c, {\rm ref}} & |\sigma| \leq \epsilon
\end{empheq}
\end{subequations}
where $\epsilon$ corresponds to the boundary value of an invariant bundle $\mathcal{B}(\epsilon)$ as defined in Def.~\ref{def:invariantBundle}, $\boldsymbol{u}_{\boldsymbol{x}}^{c, \epsilon} = \{\omega^\epsilon, \tau_y^\epsilon\}$ are control inputs at the instant when the trajectory enters the invariant bundle $\mathcal{B}(\epsilon)$, $\boldsymbol{u}_{\boldsymbol{x}}^{c, {\rm ref}}$ are nominal control inputs defined in Algorithm 1. A proof of smoothness of the above control saturation function is discussed in [\cite{utkin2013sliding}]. As Eq.~(\ref{eq:slidingcontrol}) shows, when $|\sigma| \leq \epsilon$, the control effort, $\boldsymbol{u}^{c'}_{\boldsymbol{x}}$ is scaled between $\boldsymbol{u}_{\boldsymbol{x}}^{c, \epsilon}$ and $\boldsymbol{u}_{\boldsymbol{x}}^{c, {\rm ref}}$. This control law is composed of an ``inner'' and an ``outer'' controller. The ``outer'' controller steers states into $\mathcal{B}(\epsilon)$ while the ``inner'' controller maintains states within $\mathcal{B}(\epsilon)$.  Recovery trajectories are shown in Fig.~\ref{fig:DP_robust_bound} for two scenarios in the presence of random disturbances.
\begin{figure}[t]
\centering
\includegraphics[width=0.95\linewidth]{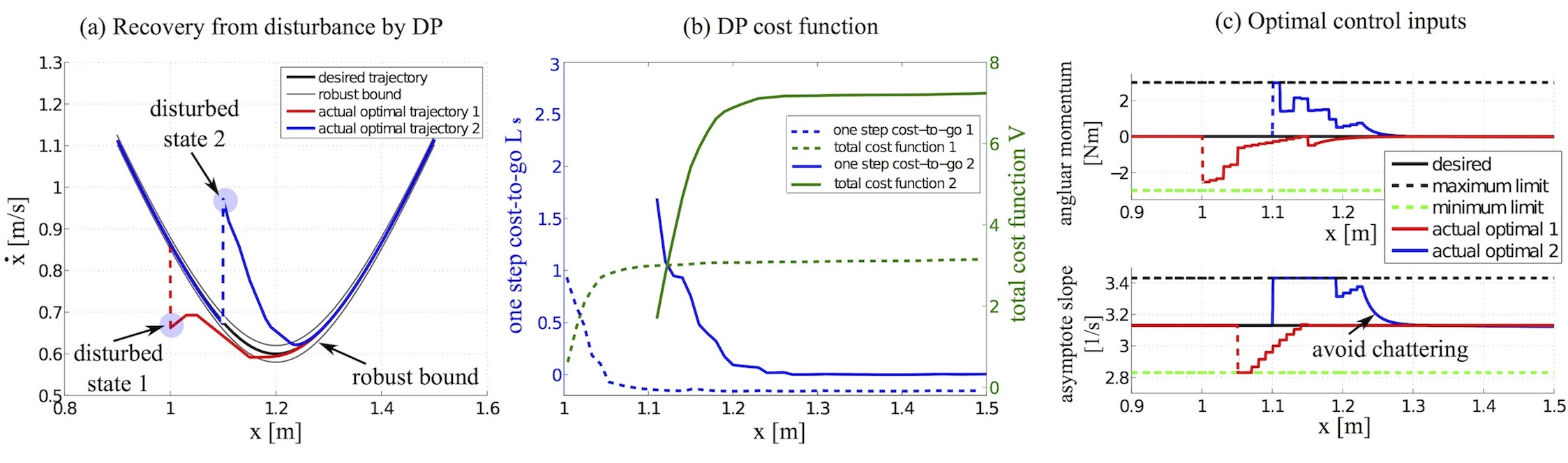}
\caption{\captionsize Chattering-free recoveries from disturbance by the proposed optimal recovery continuous control law. Subfigure (a) shows two random disturbances, where disturbed state 1 has a positive impulse while the disturbed state 2 has a negative impulse. Control variables are piece-wise constant within one stage as shown in subfigure (c). In these simulations, angular momentum control range is $[-3, 3]$ Nm and phase space asymptote slope range is $[2.83, 3.43]$ 1$/$s. Other parameters are shown in Table~\ref{table:DP_parameter}.}
\label{fig:DP_robust_bound}
\end{figure}

\begin{table}[hbt!]
\caption{Dynamic Programming Parameters} 
\begin{center}\vspace{-5mm}
\begin{tabular}{|c|c||c|c|}
\hline
Parameter & Range & Parameter & Range\\ \hline
desired momentum $\tau^{\rm ref}_y$ & 0 Nm& desired asymptote slope $\omega^{\rm ref}$ &3.13 1$/$s\\ \hline
angular momentum range $\tau_y^{\rm range}$ & [-3, 3] Nm& asymptote slope range $\omega^{\rm range}$ &[2.83, 3.43] 1$/$s\\ \hline
foot placement $x_{\rm foot}$ & 1.2 m & mass $m$ & 1 kg\\\hline
stage range & [0.9, 1.5] m & state range & [0.03, 1.5] m/s \\\hline
stage resolution $\rm stage_{\rm res}$ & 0.01 m & state resolution $\rm state_{\rm res}$ & 0.01 m/s \\\hline
disturbed initial state $s_{\rm initial}$ & $(1.1$ m ,$0.7$ m$/$s$)$ & desired apex velocity $\dot{x}_{\rm apex}$ & 0.6 m$/$s\\\hline
weighting scalar $\Gamma_{1}$ & 5 &weighting scalar $\Gamma_{2}$ & 5\\\hline 
weighting scalar $\beta$ & $4 \times 10^4$ &weighting scalar $\alpha$ & 100 \\\hline 
\end{tabular}
\end{center}
\label{table:DP_parameter}
\end{table}

Since the control inputs are constrained within a desired range, i.e. $\boldsymbol{u}^c_{\boldsymbol{x}}  \in \boldsymbol{u}^{c, {\rm range}}_{\boldsymbol{x}}$, we re-define the finite-phase control-dependent recoverability bundle.
Given an acceptable deviation $\epsilon_0$ from the manifold, the practical invariant bundle is $\mathcal{B}(\epsilon_0)$.
The control policy in Eq.~(\ref{eq:slidingcontrol}) generates a control-dependent practical recoverability bundle (a.k.a., region of attraction to the ``boundary-layer'') defined as 
 \begin{align}\label{eq:OptimalRecoverabilty}
  \mathcal{R}(\epsilon, \zeta_{\rm trans}) = \Big\{\boldsymbol{x}_{\zeta} \in \mathbb{R}^2, \quad \zeta_0\le\zeta\le \zeta_{\rm trans}\; \big| \; 
    \boldsymbol{x}_{{\zeta}_{\rm trans}} \in \mathcal{B}(\epsilon), \quad \boldsymbol{u}^c_{\boldsymbol{x}}  \in \boldsymbol{u}^{c, {\rm range}}_{\boldsymbol{x}} \Big\}. 
 \end{align}

\begin{theorem*}[Existence of Recoverability Bundle]\label{theo:theorem}
Given a Lyapunov function $V = \sigma^2/2$, a phase progression transition value $\zeta_{\rm trans}$, and the control policy in Eq. (\ref{eq:slidingcontrol}), a recoverability bundle $\mathcal{R}(\epsilon, \zeta_{\rm trans})$ exists and can be bounded by a maximum tube radius $\sigma^{\rm max}_0$.
\end{theorem*}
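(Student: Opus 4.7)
The plan is to use the candidate Lyapunov function $V=\sigma^2/2$ together with the saturated optimal controller of Eq.~(\ref{eq:slidingcontrol}) to show that every trajectory initialized with a small enough $|\sigma(\zeta_0)|$ reaches the invariant bundle $\mathcal{B}(\epsilon)$ before the guard at $\zeta_{\rm trans}$ is crossed. Existence of $\mathcal{R}(\epsilon,\zeta_{\rm trans})$ is then read off directly from the definition in Eq.~(\ref{eq:OptimalRecoverabilty}), and the maximal tube radius $\sigma_0^{\rm max}$ is obtained by integrating a phase-domain differential inequality for $V$ over the finite window $[\zeta_0,\zeta_{\rm trans}]$.

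First I would differentiate $V$ along the closed-loop Sagittal dynamics of Eq.~(\ref{eq:dynx}) to obtain $\dot V=\sigma\,\dot\sigma$, where $\dot\sigma$ is computed from the explicit expression of $\sigma$ in Eq.~(\ref{eq:simplifiedPSM}) via the chain rule with the vector field $\boldsymbol{\mathcal{F}_x}$. Substituting the outer branch of the policy, Eq.~(\ref{eq:slidingcontrol_a}), into $\dot\sigma$ and using the fact that the dynamic program of Eq.~(\ref{eq:optimization-1}) drives the controls to reduce the penalty $\beta\sigma^2$ subject to $\omega\in[\omega^{\rm min},\omega^{\rm max}]$ and $\tau_y\in[\tau_y^{\rm min},\tau_y^{\rm max}]$, I would derive a strict-decrease estimate of the form $\dot V(\zeta)\le -\gamma(\sigma)$ for $|\sigma|>\epsilon$, where $\gamma$ is continuous and positive on $\{|\sigma|>\epsilon\}$. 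This establishes attractiveness toward $\mathcal{B}(\epsilon)$ while the outer law is active; the inner branch, Eq.~(\ref{eq:slidingcontrol_b}), then enforces positive invariance of $\mathcal{B}(\epsilon)$ through the standard boundary-layer argument already referenced in the excerpt.

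Next I would turn attractiveness into a \emph{finite-phase} recoverability statement by integrating $\dot V\le -\gamma(\sigma)$ over $\zeta\in[\zeta_0,\zeta_{\rm trans}]$. Using the worst-case decrease rate achievable when $\omega$ and $\tau_y$ saturate, I would get a lower bound on the total drop of $V$ across the window; inverting that relation yields the largest admissible $V(\zeta_0)$, and hence a maximum $\sigma_0^{\rm max}$ such that $|\sigma(\zeta_{\rm trans})|\le\epsilon$ is guaranteed. Any initial state with $|\sigma(\zeta_0)|\le\sigma_0^{\rm max}$ therefore lies in the set defined by Eq.~(\ref{eq:OptimalRecoverabilty}), which proves both the existence of $\mathcal{R}(\epsilon,\zeta_{\rm trans})$ and the claimed tube-radius bound.

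The hard part will be twofold. First, $\sigma$ in Eq.~(\ref{eq:surface1}) is a quartic polynomial in $(x,\dot x)$ and $\omega$ enters $\boldsymbol{\mathcal{F}_x}$ quadratically, so the map $(\omega,\tau_y)\mapsto\dot\sigma$ is nonconvex; producing a clean, state-independent decrease rate $\gamma$ therefore requires restricting to a compact invariant domain bounded away from the saddle at $(x_{\rm foot},0)$ shown in Fig.~\ref{fig:Detailed-2Half-Steps1}, and exploiting the explicit structure of $\sigma$ there. Second, $\zeta$ is an arc length along the CoM path rather than a time variable, so the comparison argument must be transported through $d\zeta/dt=|\boldsymbol{\dot{p}}_{\rm com}|$ to guarantee that the finite phase window actually corresponds to enough elapsed time for the required decrease of $V$. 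Once both technicalities are handled, the explicit value of $\sigma_0^{\rm max}$ follows by algebra from the integrated bound.
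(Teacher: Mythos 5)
Your outline follows the same route as the paper: take $V=\sigma^2/2$, establish a decrease inequality along the closed-loop dynamics, integrate it over the finite window up to $\zeta_{\rm trans}$, and invert to obtain $\sigma_0^{\rm max}$. But there is a concrete gap at the central step, and it sits exactly where you flag your two ``hard parts.'' The paper's proof works because it actually computes $\dot\sigma$ along Eq.~(\ref{eq:dynx}) using the simplified manifold of Eq.~(\ref{eq:simplifiedPSM}): the term $2\dot{x}\ddot{x}/\omega^2$ produced by the chain rule contains $\ddot{x}=\omega^2(x-x_{\rm foot})-\tfrac{\omega^2}{mg}\tau_y$, so the $\omega^2(x-x_{\rm foot})$ contribution cancels \emph{exactly} against $-2\dot{x}(x-x_{\rm foot})$, leaving $\dot{V}=-\tfrac{2\dot{x}_{\rm apex}^2}{mg}\,\sigma\,\dot{x}\,\tau_y$. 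Consequently $\omega$ drops out of $\dot V$ entirely: there is no nonconvex map $(\omega,\tau_y)\mapsto\dot\sigma$ to wrestle with, no need to restrict to a compact set away from the saddle, and the decrease condition reduces to the simple sign condition $\sigma\,\tau_y\,\dot{x}>0$. Your proposed derivation of a generic rate $\gamma(\sigma)$ from the DP cost structure would not discover this, and without it you cannot get the explicit tube radius the theorem asserts.

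The second miss is the mechanism that produces the closed-form bound. Writing $|\sigma|=\sqrt{2V}$ turns the identity above into $\dot V\propto-\dot{x}\,\tau_y\sqrt{V}$, a finite-time-convergence-type inequality; integrating $dV/\sqrt{V}$ and using $\dot{x}\,dt=dx$ converts the time integral into a position integral, giving $\sqrt{V_0}=\sqrt{V_{\rm trans}}+\tfrac{1}{2}\mu(x_{\rm trans}-x_0)\tau_y$ and hence $\sigma_0^{\rm max}=\epsilon+\tfrac{\sqrt{2}}{2}\mu(x_{\rm trans}-x_0)\tau_y$ with $\mu=2\sqrt{2}\dot{x}_{\rm apex}^2/(mg)$. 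This sidesteps entirely the phase-versus-time transport through $d\zeta/dt=|\boldsymbol{\dot{p}}_{\rm com}|$ that you identify as the second obstacle: the bound is expressed in terms of Sagittal distance traveled, not elapsed time or arc length. In short, the skeleton of your argument is right, but the two computations you defer as technicalities are the proof; as written, your plan would stall on a nonconvexity problem that does not actually exist and on a time-reparametrization that the correct calculation never needs.
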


\begin{proof}
We will use a Lyapunov function to prove the existence of $\mathcal{R}(\epsilon, \zeta_{\rm trans})$. First, let us define $V = \sigma^2/2$ based on Eq.~(\ref{eq:slidingcontrol_a}).  If there exists a control policy such that $\exists\; \sigma_0 > \epsilon, \sigma_{\rm trans} \leq \epsilon$, then, $\mathcal{R}(\epsilon, \zeta_{\rm trans})$ is composed of the range of values $(x, \dot{x})_{\zeta}, \; \zeta_0\le\zeta\le \zeta_{\rm trans}$, such that $V_{\rm trans} = \sigma_{\rm trans}^2/2\le \;\epsilon^2/2$. Taking the derivative of $V$ along the pendulum dynamics in Eq.~(\ref{eq:accel}), we have
\begin{align}\nonumber
\dot{V} = \sigma\dot{\sigma} &= \sigma \dot{x}^2_{\rm apex} \Big( -2 \dot{x} (x - x_{\rm foot}) + 2 \dot{x} \ddot{x}/\omega^2\Big) = \sigma  \dot{x}^2_{\rm apex} \Big( -2 \dot{x} (x - x_{\rm foot}) + 2 \dot{x} \big((x - x_{\rm foot}) - \dfrac{\tau_y}{mg}\big) \Big)\\\label{eq:stability}
&= -\dfrac{2  \dot{x}^2_{\rm apex} \sigma \dot{x} \tau_y}{mg} = -\dfrac{2 \sqrt{2}  \dot{x}^2_{\rm apex} \dot{x} \tau_y \cdot {\rm sign}(\sigma)}{mg} \sqrt{V} \le 0.
\end{align}

\noindent which proves the stability (i.e., attractiveness) of $\sigma=0$. 
For example, consider the case of walking forward, $\dot{x} > 0$. Then, as long as $\sigma \cdot \tau_y > 0$, i.e., the pitch torque has the same sign as $\sigma$, attractiveness is guaranteed.  That is, if $\sigma>0$ (the robot moves forward faster than expected), then we need $\tau_y > 0$ to slow down, and vice-versa.  To estimate $\mathcal{R}(\epsilon,\zeta_{\rm trans})$, we can use the optimal control policy proposed in Eq.~(\ref{eq:slidingcontrol}), defining an ``optimal'' recoverability bundle; or (ii) the maximum control inputs (without any regards to optimality) obtained by selecting the bounds $\boldsymbol{u}^{c, {\rm range}}_{\boldsymbol{x}}$, defining the ``maximum'' recoverability bundle: 
\paragraph{Case I: DP based Control.}
 If $\tau_y$ is solved by the optimization of Eq.~(\ref{eq:optimization-1}), we get $|\tau_y| > |\tau^\epsilon_y|$. Then Eq.~(\ref{eq:stability}) becomes
\begin{align}\label{eq:recoverability-condition-1}
\dot{V} < -\dfrac{2 \sqrt{2}  \dot{x}^2_{\rm apex} \dot{x} |\tau^\epsilon_y|}{mg} \sqrt{V} <   0.
\end{align}
\paragraph{Case II: Supremum (Bang-Bang) Control.}
 If we design $\tau_y= \tau_y^{\rm max}\sign(\dot{x})$ and $\dot{x} > 0$, then,
\begin{align}\label{eq:recoverability-condition-2}
\dot{V} = -\dfrac{2 \sqrt{2}  \dot{x}^2_{\rm apex} \dot{x} \tau_y^{\rm max}\sign(\dot{x})}{mg} \sqrt{V} 
            = -\dfrac{2 \sqrt{2}  \dot{x}^2_{\rm apex} \dot{x} \tau_y^{\rm max}}{mg} \sqrt{V} < 0.
\end{align}
\begin{figure}[t]
 \centering
   \includegraphics[width=0.9\linewidth]{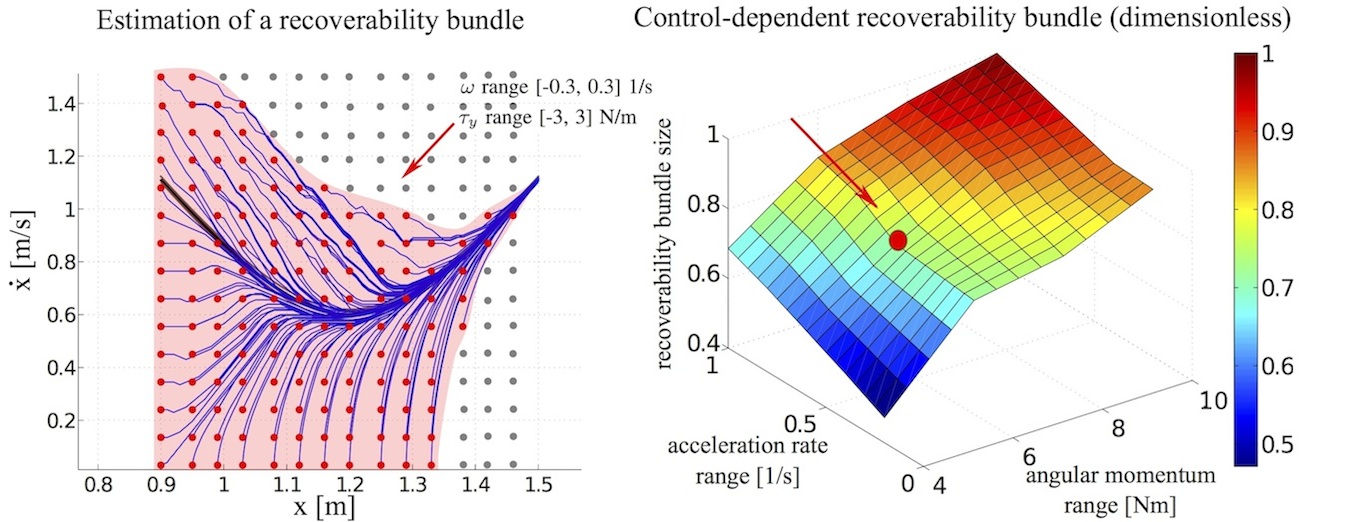}
 \caption{\captionsize Estimation of dimensionless control-dependent recoverable region. In the left figure, disturbed states are sampled in a discretized grid and the shaded region represents the recoverability bundle. As it is shown, a larger recoverable region is achieved in the beginning of the phase (i.e., before the apex state). In the ending phase, the recoverable region shrinks to the invariant bundle. Here the control constraint is: $\omega \in [-0.3, 0.3]$ 1/s and $\tau_y \in [-3, 3]$ N/m. The right figure shows the dependence of the size of the recoverable region with respect to the allowed control ranges. For visualization, the control range in the horizontal axes are labeled as values $r$, which implies the range is $[-r/2, r/2]$. 
 }
 \label{fig:RecoverabilitySet}
\end{figure}
\noindent Note that, the bounded $\dot{V}$ in Eqs.~(\ref{eq:recoverability-condition-1}) and (\ref{eq:recoverability-condition-2}) have similar forms and can be combined into a common integral equation of form
\begin{equation}\label{eq:integralLyapunov}
\int_{V_0}^{V_{\rm trans}} \dfrac{dV}{\sqrt{V}} = - \int_{t_0}^{t_{\rm trans}} \mu \dot{x} \tau_y dt = - \mu \tau_y(x_{\rm trans} - x_0),
\end{equation}
where $\mu = (2 \sqrt{2}  \dot{x}^2_{\rm apex})/(mg)$, $\tau_y = \tau_y^\epsilon$ for Case I while $\tau_y = \tau_y^{\rm max}$ for Case II. Eq.~(\ref{eq:integralLyapunov}) can be solved using common integral rules to yield 
\begin{equation}
\sqrt{V_0} = \sqrt{V_{\rm trans}} + \dfrac{1}{2}\mu \cdot (x_{\rm trans} - x_0) \cdot \tau_y.
\end{equation}
Since $V_0 = \sigma_0^2/2, V_{\rm trans} = \sigma_{\rm trans}^2/2 \leq \epsilon^2/2$, we get
\begin{align}
\quad \sigma_0 \leq \epsilon + \dfrac{\sqrt{2}}{2}\mu \cdot (x_{\rm trans} - x_0) \cdot \tau_y = \sigma^{\rm max}_0,
\end{align}
where $\sigma^{\rm max}_0$ defines the maximum tube radius. Therefore we can re-write the recoverability bundle of Eq.~(\ref{eq:OptimalRecoverabilty}) using this new tube radius as:
\begin{equation}\label{eq:practicalRecover}
\mathcal{R}(\epsilon, \zeta_{\rm trans}) = \Big\{\boldsymbol{x}_{\zeta} \in \mathbb{R}^2, \; \zeta_0\le\zeta\le \zeta_{\rm trans}\; \big| \; 
    \epsilon \leq \sigma_0^{\rm max}\Big\}.
\end{equation}
The existence of recoverability bundle is proved and a maximum tube radius is provided.
\end{proof}
\noindent Since Eq.~(\ref{eq:recoverability-condition-1}) has an inequality bound while Eq.~(\ref{eq:recoverability-condition-2}) has an equality bound, DP based control is an optimal but conservative estimation of the true recoverability bundle while supremum control is an accurate but non-optimal estimation for the recoverability bundle. 
\begin{figure}[!t]
\centering
\includegraphics[width=0.95\linewidth]{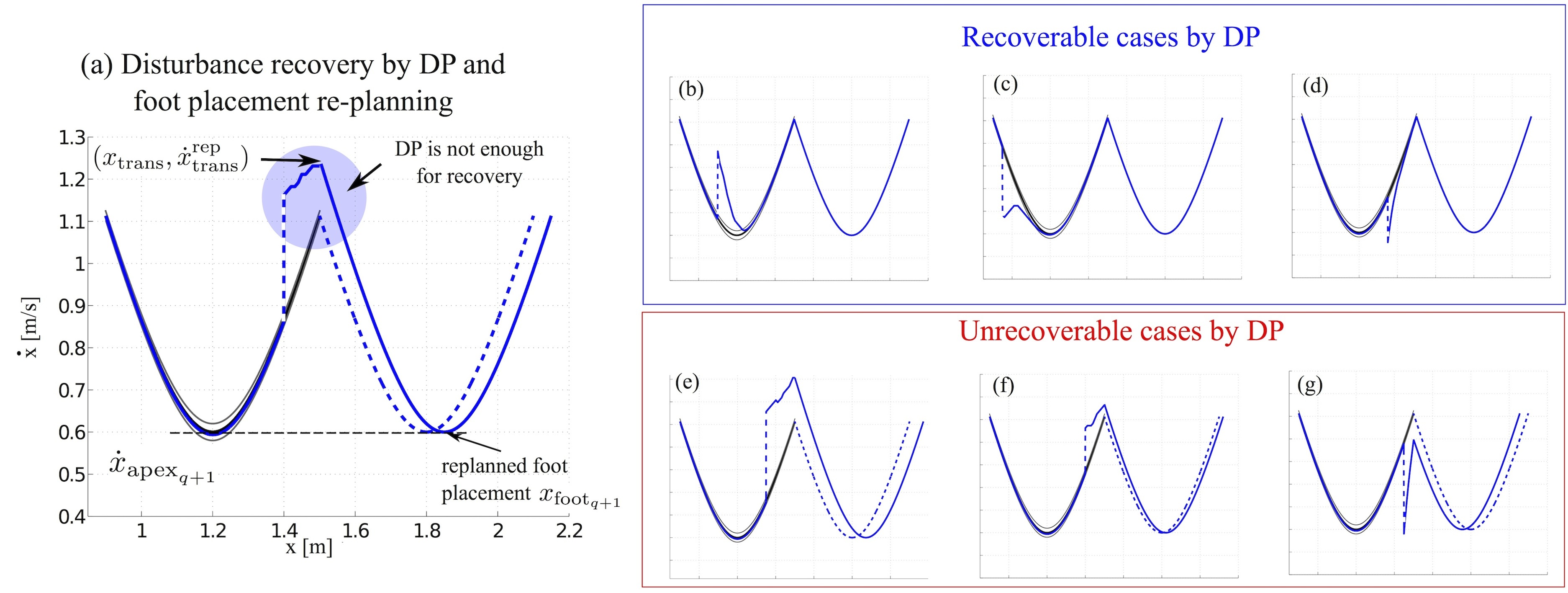}
\caption{\captionsize Recovery from a disturbance by re-planning Sagittal foot placement. In this case, the next apex velocity is given \textit{a priori} and maintained despite the disturbance. In subfigures (b)-(d), first-stage continuous DP control is sufficient to achieve the recovery while in the cases of subfigures (e)-(g) it is not. The latter cases occur when either disturbance occurs too close to the transition or is too large. In these cases, a new next foot placement is automatically re-planed based on Eq.~(\ref{eq:replanfoot}).}
\label{fig:High_Level_Robust}
\end{figure}
Our study aims at optimal performance, and therefore the control policy generated from dynamic programming will be used to estimate the recoverability bundle. 
To estimate $\mathcal{R}(\epsilon, \zeta_{\rm trans})$, we perform a grid sampling from the initial condition $\boldsymbol{x}_{\zeta_0}$, based on the ranges of Table~\ref{table:DP_parameter}. Then we execute the optimization of Eq.~(\ref{eq:optimization-1}) for each sampled $\boldsymbol{x}_{\zeta_0}$ (treated as a realization) and repeat this procedure for all $\boldsymbol{x}_{\zeta_0}$ in the grid. The feasible realizations of recovery trajectories (i.e. the convergence into $\mathcal{B}(\epsilon)$ before $\zeta_{\rm trans}$) constitute the recoverability bundle\footnote{Here, only positive disturbed state $\dot{x} > 0$ is considered. Recovery of $\dot{x} < 0$ could be achieved in a similar manner. All that is needed is to integrate phase-space trajectories in a backward pattern, detect the PSM deviation by Eq.~(\ref{eq:surface1}) and look up an offline DP policy table designed for backward walking. 
}. 
An example of an estimated recoverability bundle is shown in Fig.~\ref{fig:RecoverabilitySet} (a). 
\subsection{Discrete Foot Placement Control}
\label{subsec:secondoptimization}
When the disturbance is large enough to move the state outside of the recoverability manifold, the controller will not be able to recover to the invariant bundle.  We propose to use the guard strategies discussed in Section~\ref{sec:manifold} for recovery. As a case study, let us use the position guard strategy and re-plan the foot placement for the next step as was illustrated in Fig.~\ref{fig:VelocityDisturbSet1} (c.2). In that strategy, the next apex velocity, $\dot{x}_{{\rm apex}_{q+1}}$, is kept as planned. Hence, we analytically solve for a new foot placement based on the PSM given in Eq.~(\ref{eq:simplifiedPSM}) such that the apex velocity is achieved. Let us define the re-planned phase-space transition state as $(x_{\rm trans}, \dot{x}^{\rm rep}_{\rm trans})$, where only velocity $\dot{x}^{\rm rep}_{\rm trans}$ varies. Since $\dot{x}_{{\rm apex}_{q+1}}$ is unchanged, the adjusted Sagittal foot placement $x_{{\rm foot}_{q+1}}$ is solved by simple manipulation of Eq.~(\ref{eq:simplifiedPSM})
\begin{align}\label{eq:replanfoot}
 x_{{\rm foot}_{q+1}} = x_{\rm trans} + \dfrac{1}{\omega} (\dot{x}^{{\rm rep} 2}_{\rm trans} 
                  - \dot{x}^2_{{\rm apex}_{q+1}})^{1/2}.
\end{align}
For forward walking, $x_{{\rm foot}_{q+1}} > x_{\rm trans}$, prompting us to ignore the solution with the negative square root. 
To evaluate the performance of this step re-planning method, we consider the six disturbances scenarios of Fig.~\ref{fig:High_Level_Robust}. The top three scenarios are recoverable using the DP-based continuous controller that we presented earlier. In the bottom three scenarios the disturbance occurs too close to the transition or is too large and therefore requires the foot placement re-planner described above to be executed. Once foot placements have been re-planned in the Sagittal direction, lateral foot placements are re-planned using Algorithm~\ref{al:newtonraphson}. 

To conclude, the two-stage procedure discussed in this section constitutes the core process of our robust-optimal phase-space planning strategy. The combine locomotion planning procedure is shown in Algorithm~\ref{al:overall-planning} in the Appendix. The computational burden of our technique is minimal because: (I) Once a disturbance is detected, the offline DP policy table is quickly looked up rather than re-computed. (II) If disturbances are too high, Eq.~(\ref{eq:replanfoot}) will quickly yield a new proper foot location, relying on the efficiency of a closed analytical solution.

\section{Simulated Results}
\label{sec:topologies}


\begin{figure*}[t]
 \centering
 \includegraphics[width=\linewidth]{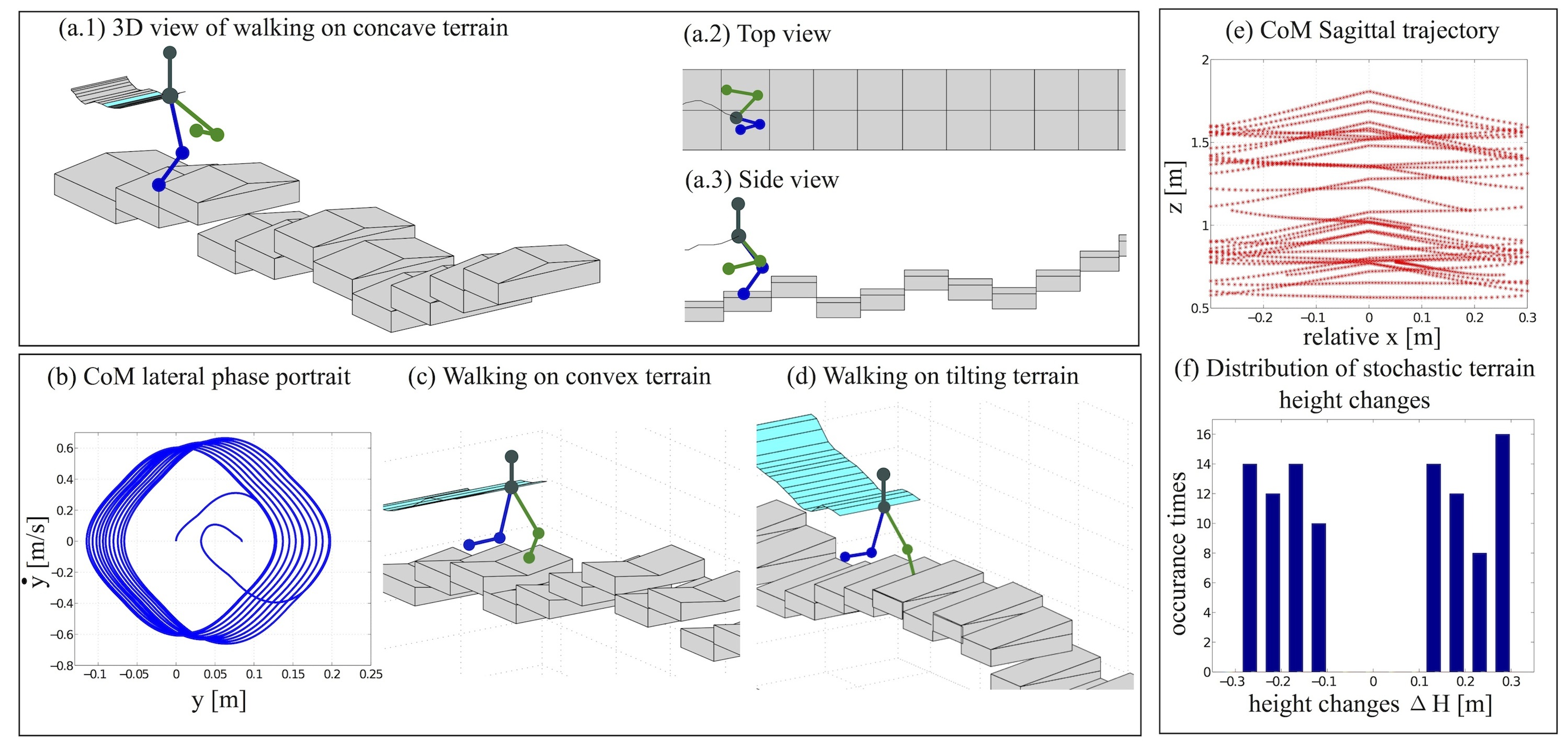}
 \caption{\captionsize Traversing various rough terrains. The subfigures on the left block show dynamic locomotion over rough terrains with varying heights. The block on the right shows the height variation distribution over 100-steps.}
 \label{fig:DifferentTerrain}
\end{figure*}

\subsection{Dynamic Walking over Rough Terrains}
We validate the versatility of our phase-space planning and control strategy by performing locomotion over terrains with height variations randomly generated. Three challenging terrains are tested as shown in Fig.~\ref{fig:DifferentTerrain}: (a) a terrain with concave steps, (c) a terrain with convex steps and (d) a terrain with inclined steps. The height variation, $\Delta h_k$, of two consecutive steps is randomly generated based on the uniform distribution, 
\begin{equation}
\Delta h_k  = h_{k+1} - h_{k} \sim \textrm{U} \left\{
     (-\Delta h_{\rm max}, -\Delta h_{\rm min})\cup
     (\Delta h_{\rm min}, \Delta h_{\rm max})\right\},
\end{equation}
 where, $h_{k}$ represents the height of the $k^{\rm th}$ step, $\Delta h_{\rm min} = 0.1$ m, $\Delta h_{\rm max} = 0.3$ m. A $10^\circ$ tilt angle is used for the slope of the steps. Foot placements are chosen a priori using simple kinematic rules and considering the length of the terrain steps. Separately, we choose apex velocities to be around $0.6$ m/s, plus minus a small increment depending on the height change. Finally we choose simple CoM path manifolds that conform to the steps. We then apply the planning pipeline outlined in previous sections including, the generation of trajectories based on Algorithm 1 and the search for step transitions based on the procedures of Section 5.2. 

Fig.~\ref{fig:DifferentTerrain} (a) shows a snapshot of bipedal walking on the terrain with concave steps.  The lateral CoM phase portrait in Fig.~\ref{fig:DifferentTerrain} (b) shows stable walking over 25 steps.  Fig.~\ref{fig:DifferentTerrain} (c) and (d) show other types of rough terrains also tested in simulation over 100 steps.   Fig.~\ref{fig:DifferentTerrain} (e) shows the CoM path manifolds we choose to step over the terrain.  The bar graph in Fig.~\ref{fig:DifferentTerrain} (f) shows the distribution of the height of the randomly generated terrain. 

\begin{figure*}[t]
 \centering
   \includegraphics[width=0.75\linewidth]{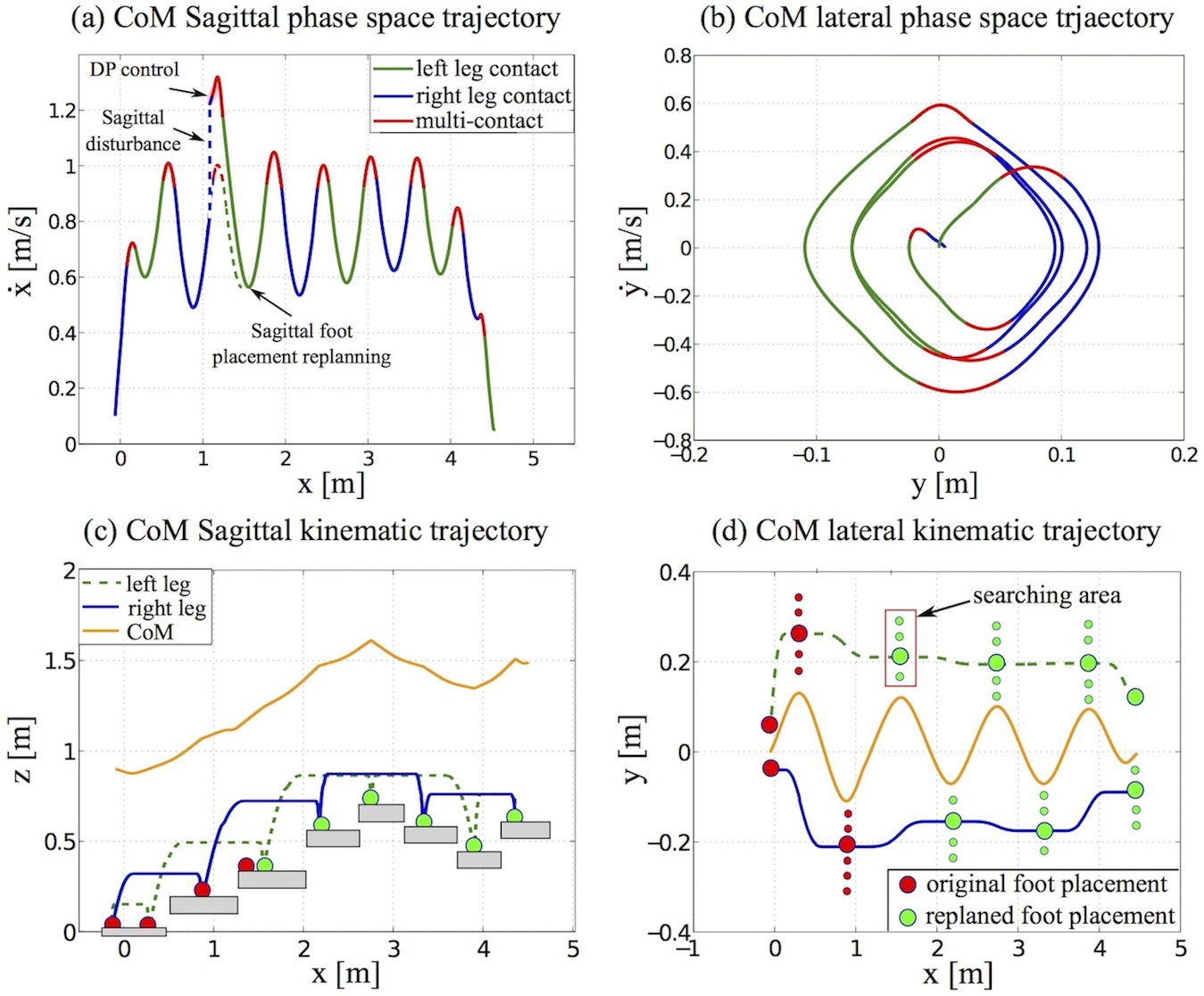}
 \caption{\captionsize Recovery from Sagittal plane disturbance. To recover from a Sagittal push, the planner uses both DP continuous control and discrete foot placement re-planning in a sequential manner. $\reddot$ denotes the pre-defined foot placement before the disturbance while $\greendot$ denotes the re-planned foot placement after the disturbance.
} 
 \label{fig:SagittalDisturbance}
 \vspace{-1mm}
\end{figure*}

\subsection{Dynamic Walking under External Disturbances}


\subsubsection{Recovery from Disturbance on the Sagittal Plane}

We first make the robot walk on a terrain based on the planning algorithms described in the paper. We then apply a pushing force in the Sagittal direction, which causes an instantaneous velocity jump as shown in Fig.~\ref{fig:SagittalDisturbance} (a). This disturbance is quite large such that the robot's state cannot recover to its nominal PSM using the proposed optimal controller. Thus, the foot location re-planning strategy previously described is executed. The dashed line in Fig.~\ref{fig:SagittalDisturbance} (a) represents the original phase-space trajectory while the solid line represents the re-planned trajectory. Also, instead of an instantaneous step transition, a multi-contact transition is used as described in the Appendix~\ref{sec:multicontact}.

\begin{figure*}[t]
 \centering
   \includegraphics[width=0.8\linewidth]{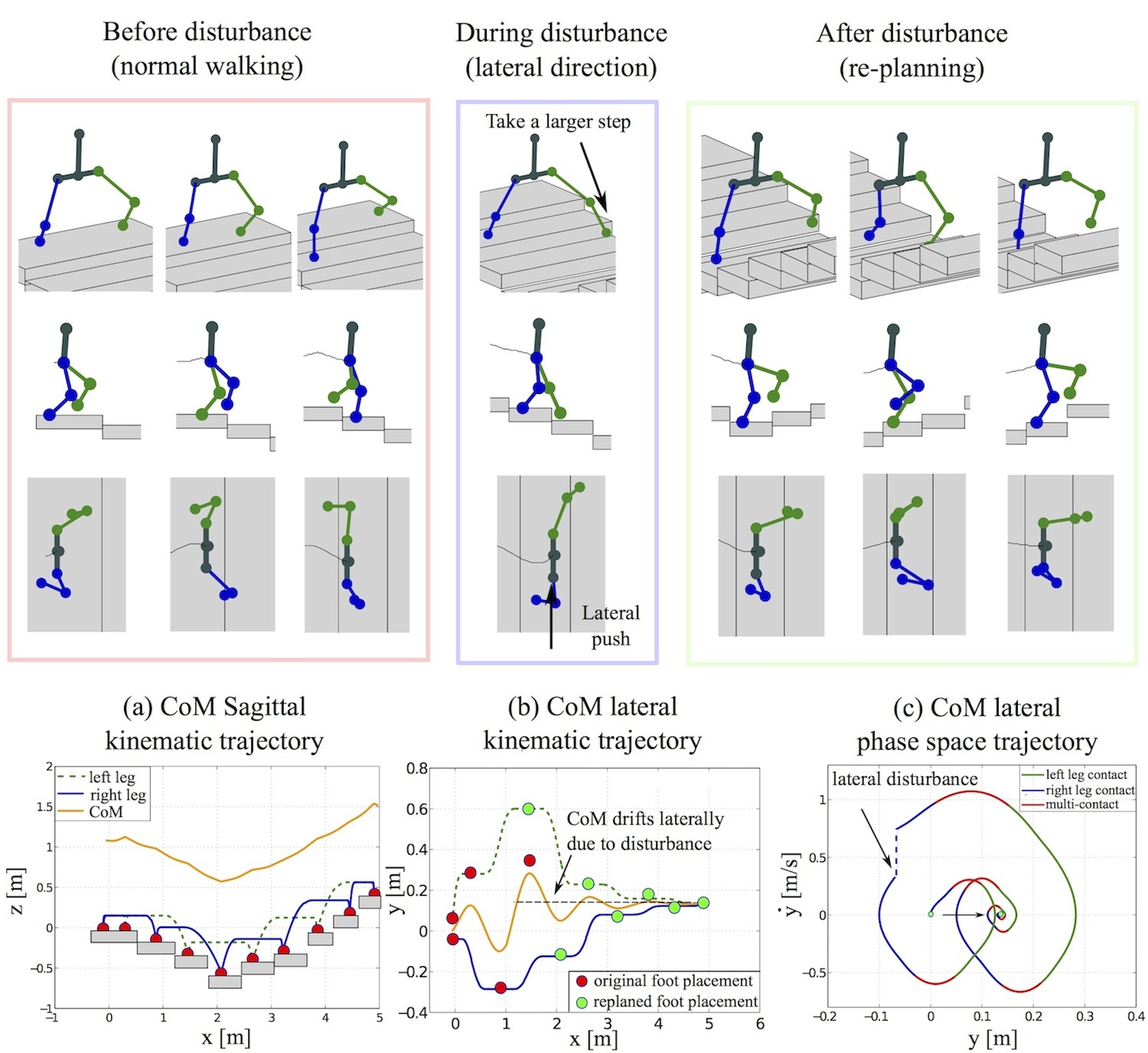}
 \caption{\captionsize Recovery from lateral plane disturbance. When the lateral disturbance occurs, the foot placement is re-planned to avoid falling down. Correspondingly, the CoM trajectory drifts to the left side as shown in (b) while a CoM velocity jump appears in (c). After the disturbance, the CoM trajectory is re-generated based on this new lateral foot placement to achieve stable walking.}
 \label{fig:SequantialSnapshot}
\end{figure*}

\subsubsection{Recovery from Disturbance on the Lateral Plane}

For this simulation, we make the robot once more walk on the rough terrain according to its nominal plan. Then, in its third step, we apply a lateral disturbance as shown in Fig.~\ref{fig:SequantialSnapshot} (b) and (c). To deal with this disturbance, a new lateral foot placement is re-planned according to Algorithm 2. 

\subsection{Bouncing over A Disjointed Terrain}
A more challenging locomotion scenario is explored using a disjointed terrain. The slope of the surfaces is $70^\circ$. The goal is to step up over the surfaces by bouncing over the terrain. A physics based dynamic simulation called SrLib is used for validation and a whole body operational space controller [\cite{sentis2010compliant}] is implemented to follow the locomotion plans.

\begin{figure*}[t]
 \centering
   \includegraphics[width=0.85\linewidth]{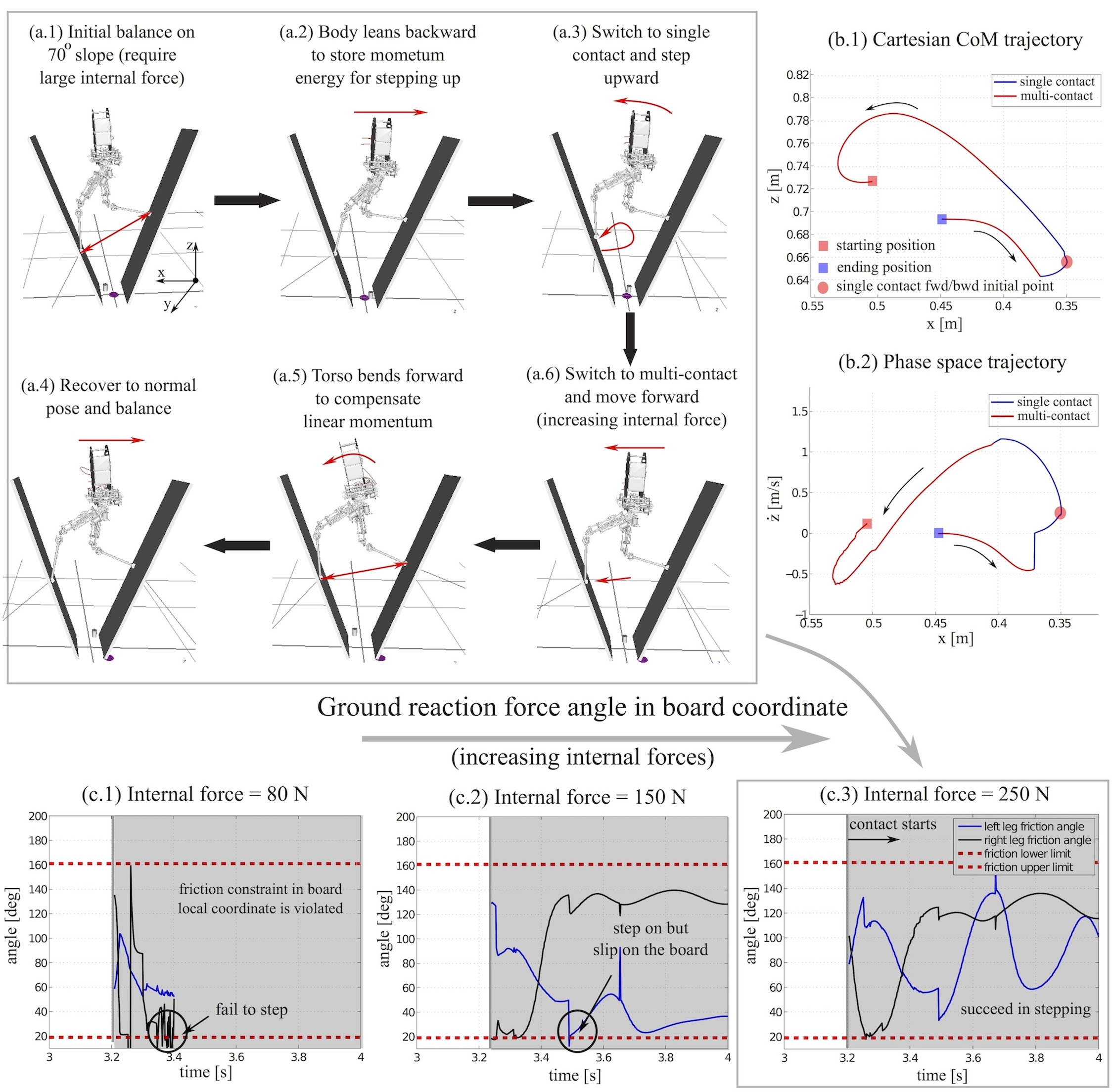}
 \caption{\captionsize Bouncing over a disjointed terrain. In this simulation, a biped balances on a steep disjointed terrain and dynamically bounces upwards. Internal force and angular momentum are controlled properly to achieve this motion. Subfigure (b) shows the 2D Cartesian CoM trajectory and the $x-\dot{z}$ phase portrait. (c) shows contact reactions for three different desired internal forces.}
 \label{fig:wedgejumping}
\end{figure*}

Snapshots of a one-step bouncing behavior are shown in Fig.~\ref{fig:wedgejumping} (a). To successfully bounce over the terrain, we use a CoM path manifold, shown in Fig.~\ref{fig:wedgejumping} (b.1), that mimics that of a pre-recorded human [\cite{sentis2011humanoids}]. During the multi-contact phase, we apply a $250$ N internal tension force, shown in Fig.~\ref{fig:wedgejumping} (c), between the two surfaces to avoid sliding down due to the weight of the robot. Our planner operates in the $x-\dot z$ state as shown in Fig.~\ref{fig:wedgejumping} (b.2). This is more convenient as the $\dot z$ reveals the moment at which the center of mass starts falling down. More details on this strategy are discussed in [\cite{sentis2011humanoids, zhao15hybrid}]. Note that an apex state is not used in the same way that we defined for previous rough terrain locomotion tests. Instead, we define a priori a desired keyframe state, shown as a red circle in Fig.~\ref{fig:wedgejumping} (b.2).

\section{Discussions and Conclusions}
\label{sec:Discussion}

The main focus of this paper has been on addressing the needs for planning non-periodic bipedal locomotion behaviors. These types of behaviors arise in situations where terrains are non-flat, disjointed, or extremely cluttered. The majority of bipedal locomotion frameworks have been historically focused on flat terrain or mildly rough terrain locomotion behaviors. Some of them are making their way into dynamically climbing stairs or inclined terrains. Additionally, Raibert accomplished hopping locomotion over rough terrains in the middle-80s. In contrast, our effort is centered around the goals of (1) providing metrics of robustness in rough terrain for robust control of the locomotion behaviors, (2) generalizing gaits to any types of surfaces including disjointed terrains, (3) providing formal tools to study planning, robustness, and reachability of the non-periodic gaits, and (4) demonstrating the ability of our framework to deal with large external disturbances.

Future work will focus on: (i) Experimental validations of the proposed planning and control strategy, where modeling, pose estimation, and kinematic errors, among other problems will greatly impact performance. (ii) Proposing a more realistic robot model that incorporates swing leg dynamics, and yaw and roll angular momentum. (iii) Extending the proposed framework for planning and control of omnidirectional dynamic locomotion in rough terrains. (iv) Proposing a realistic terrain perception model that does not assume perfect information about the terrain structure.

 \begin{funding}
 This work was supported by the Office of Naval Research, ONR Grant [grant \#N000141210663], NASA Johnson Space Center, NSF/NASA NRI Grant [grant \#NNX12AM03G], and NSF CPS Synergy Grant [grant \#1239136].
 \end{funding}

\begin{appendix_sec}
\appendix
\label{appen}

\section{Index to Multimedia Extensions}

\begin{tabular}{l*{6}{c}r}
Extension  & Type & Description \\
\hline
& & Demonstrations of four locomotion simulations including (1) seven-step  \\
1 & Video & phase-space motion planning; (2) dynamic walking over rough terrains; (3) dynamic \\
& & walking under external disturbances; (4) bouncing maneuver over disjointed terrain.\\
\hline
\end{tabular}

\section{Phase-Space Manifold}
\label{sec:PSMBasics}

The desired behavior of the outputs lie in manifolds $\mathcal{M}_i$ as shown in Eq. (\ref{eq:surface}).  Here we present a brief review of space curves and surfaces that relate to the \cool{phase-space manifold} (\cool{PSM}) and present a Riemannian geometry metric that can be generalized to this family of problems. 
 
The trajectory of the center of mass (CoM) of the robot is a space curve in 3D, $\boldsymbol{p}_{\rm CoM}=\{x,y,z\}\onR{3}$.  Also, for a particular output $y_i$, if we consider the case of an output-task with relative order $r_i=3$, the manifold $\mathcal{M}_i\onR{r_i}$ is a space curve $\mathcal{C}_i$ in Euclidean three-dimensional space (see Fig. \ref{fig:spacecurve}).   We assume that the curve is parametrized by an arc-length parameter $\zeta$ that we refer to as the \cools{progression} variable. Hence the position vector $\boldsymbol{\rho}_i$ of any point on the curve can be defined by specifying the value of $\zeta$,
\begin{equation}  \label{(2)}
 \boldsymbol{\rho}_i(\zeta) = \sum_{k=1}^{r_i}\xi_k(\zeta){\bf E}_k
           = \xi_1(\zeta){\bf E}_1 + \xi_2(\zeta){\bf E}_2 + \xi_3(\zeta){\bf E}_3. 
\end{equation}
\begin{figure*}[t]
 \centering
 \includegraphics[width=0.6\linewidth]{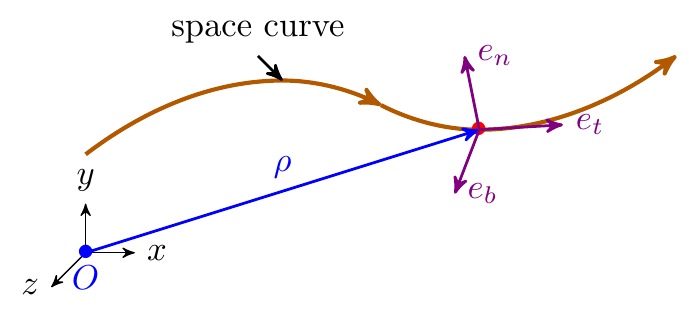}
 \caption{\captionsize A space curve showing the evolution of the Frenet triad.}
 \label{fig:spacecurve}
\end{figure*}
where, ${\bf E}_k$ is the unit vector in the $k$-axis of the Euclidean space and $\xi_k(\zeta)$ is the projection coordinate of $\boldsymbol{\rho}$ on ${\bf E}_k$. A unit tangent vector ${\bf e}_t$ to the curve can also be defined,
%
\begin{equation} \label{(3)}
 {\bf e}_t = \dfrac{\partial \boldsymbol{\rho}_i}{\partial \zeta}. 
 \end{equation}
The derivative of this vector defines the curvature $\kappa$ and the unit normal vector ${\bf e}_n$,
%
\begin{equation} \label{(4)} 
 \dfrac{\partial {\bf e}_t}{\partial \zeta} = \kappa {\bf e}_n, 
\quad {\rm where} \quad
    \kappa = \left|\!\left| \dfrac{\partial {\bf e}_t}{\partial \zeta}\right|\!\right|. 
\end{equation}
In the case of a space surface (where, $\boldsymbol{\rho}_i$ belongs to a manifold $\mathcal{M}_i$ in the output phase-space), instead of a vector ${\bf e}_t$, we have a tangent  manifold, denoted by $T_{\mathcal{M}_i}$.  The tangent space at any point can be mapped to the vector $\boldsymbol{\chi}_i\in \mathbb{R}^{r_i-1}$ that spans $T_{\mathcal{M}_i}$.  Without loss of generality, the actual motion is a specific line in space curve $\mathcal{M}_i$.  The tangent vector in the manifold $\mathcal{M}_i$ is $\boldsymbol{e}_\zeta=({\bf e}_i)_t$ while the cotangent vector in the manifold is ${\bf e}_\sigma=({\bf e}_i)_n$. ${\bf e}_\sigma$ denotes the normal deviation distance from the surface $\sigma_i$. For $r_i=3$, the binormal vector, ${\bf e}_b$ is orthogonal to ${\bf e}_t$ and ${\bf e}_n$. These three vectors are called the Frenet space. These three Frenet frame vectors are proportional to the first three derivatives of the curve $\boldsymbol{\rho}$, as a benefit of taking the arc length $\zeta$ as the parameter.

In disturbance-free cases, the system will remain in the manifold if it starts on it.  It can be considered as the zero dynamics of the surface deviation, $\sigma_i$.  When disturbance occurs, the state may escape the manifold and the controller should bring it back for recovery. To define a metric on the manifold itself and normal to it, we use Riemannian Geometry. In general, we treat each manifold $\mathcal{M}_i$ in Eq.~(\ref{eq:surface}) of the task-space $i^{\rm th}$-coordinate as independent from each other. The actual task manifold, $\mathcal{M}$ is the intersection of all $\mathcal{M}_i$ manifolds,
\begin{equation} \label{(7)}
 \mathcal{M} = \bigcap_{i}^{} \mathcal{M}_i. 
\end{equation}
This manifold also has a tangent manifold $T_{\mathcal{M}} \in \mathbb{R}^r$, where $r=\sum_i{r_i}$.  Each manifold $\mathcal{M}_i$, separately have a null-space (cotangent manifold, $T^{*}_{\mathcal{M}_i}$) and their intersection is the task cotangent manifold, $T^{*}_{\mathcal{M}}$. 

\begin{figure*}[t]
 \centering
 \includegraphics[width=0.95\linewidth]{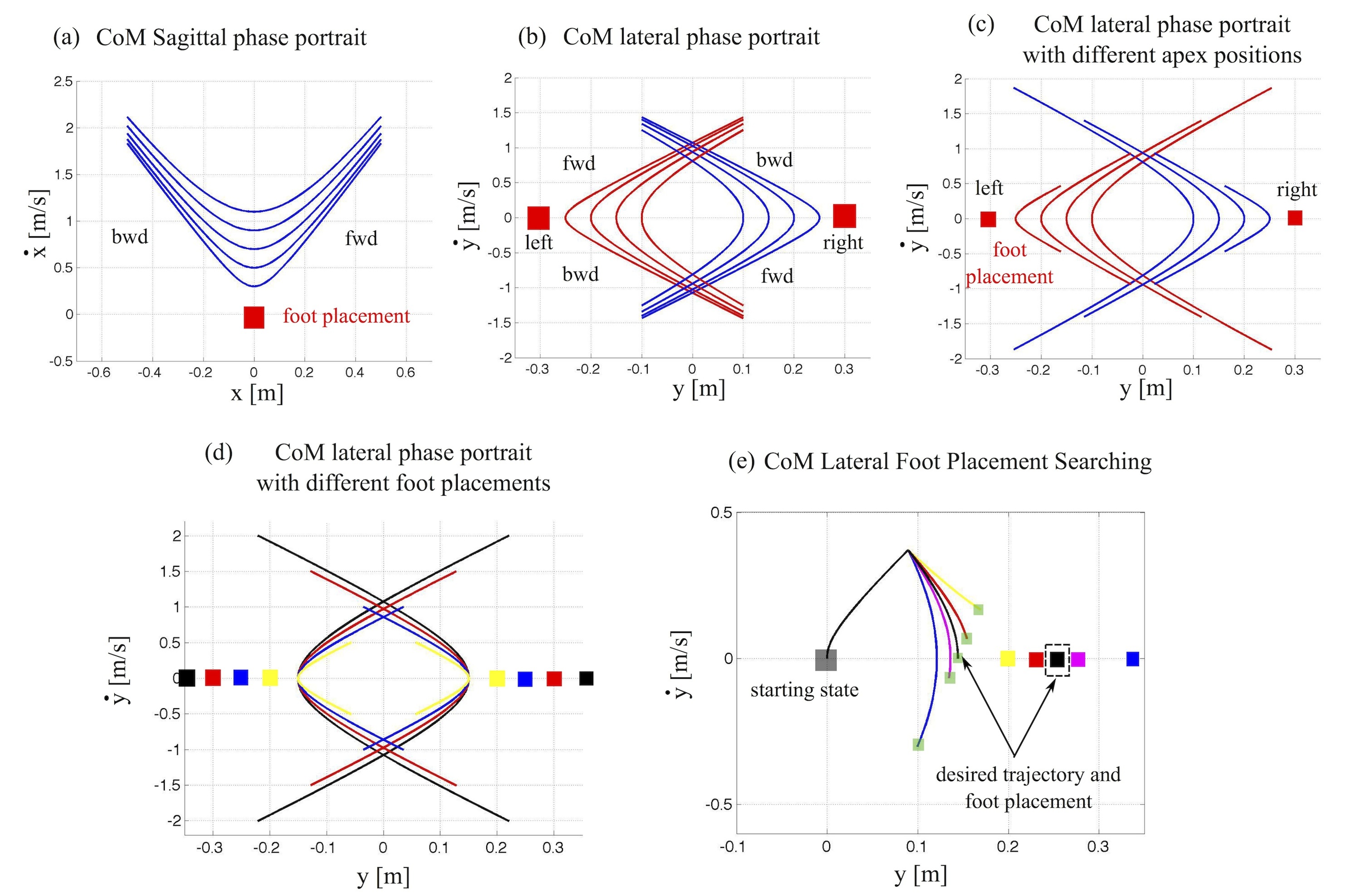}
 \caption{\captionsize Numerical integration of Sagittal and lateral dynamics. Numerical Integration is used for phase-space prediction of Sagittal and lateral prismatic
   inverted pendulum dynamics in Eq.~(\ref{eq:accel}). The phase portraits (a) and (b) correspond to the Sagittal and
   lateral CoM phase behaviors given desired foot contact
   locations (red boxes), a desired CoM surface of motion,
   and initial position and velocity conditions. If the same time duration is guaranteed for each trajectory, we can derive two different trajectories shown in (c) and (d) based on different initial conditions.  (c) shows lateral CoM behaviors given a fixed lateral foot placement and varying initial lateral position conditions. (d) corresponds to CoM
   trajectories derived given varying lateral foot placements and a fixed
   initial condition. In (e), we analyze lateral CoM trajectories of two consecutive steps with varying lateral foot placements. As the foot placement moves further apart, the acceleration becomes larger and the CoM position transverses (at the Sagittal apex) less in the y direction.}
 \label{fig:Numerical_Integration}
\end{figure*}

\section{Numerical Integration}
\label{sec:numInteg}

For the nonlinear Eq.~(\ref{eq:accel}), we assume that $\ddot x$ is approximately constant for small
increments of time. Since $f$ is normally highly nonlinear, numerical integration algorithms are normally used. More details are explained in [\cite{sentis2011humanoids, zhao2013robust}]. The pipeline for finding state-space trajectories goes as follows: (1) choose a small time perturbation $\epsilon$, (2)
given known velocities $\dot x_k$ and accelerations $\ddot x_k$, we derive the next velocity $\dot x_{k+1}$, and the next position
$x_{k+1}$, (3) plot the point $(x_{k+1},\, \dot x_{k+1})$ in the
phase-plane. We can iterate this recursion both
forward and backward. Based on this pipeline, one-step Sagittal and lateral manifold with different initial conditions are shown in Fig.~\ref{fig:Numerical_Integration}.

\section{Multi-Contact Maneuvers}
\label{sec:multicontact}

The objective of this section is to incorporate multi-contact transitions into our gait planner to achieve more natural motions. Towards this goal, we fit a polynomial function with the desired smooth behavior to the phase curve in the vicinity of the step transition. For this process, desired boundary values of position, velocity and acceleration are given by the gait designer. It is necessary to also take into account time constraints to guarantee the synchronization of the Sagittal and lateral behaviors. Boundary and timing conditions allow us to calculate the coefficients of the polynomials. More mathematical details of this approach can be found in [\cite{zhao2012three}]. In our case, a multi-contact phase is created to occupy $25\%$ of the total time slot for a given step. This percentage is adjustable based on the desired walking profile. The results are illustrated in Fig.~\ref{fig:Multicontact}. 

\begin{figure}[t]
 \centering
   \includegraphics[width=0.9\linewidth]{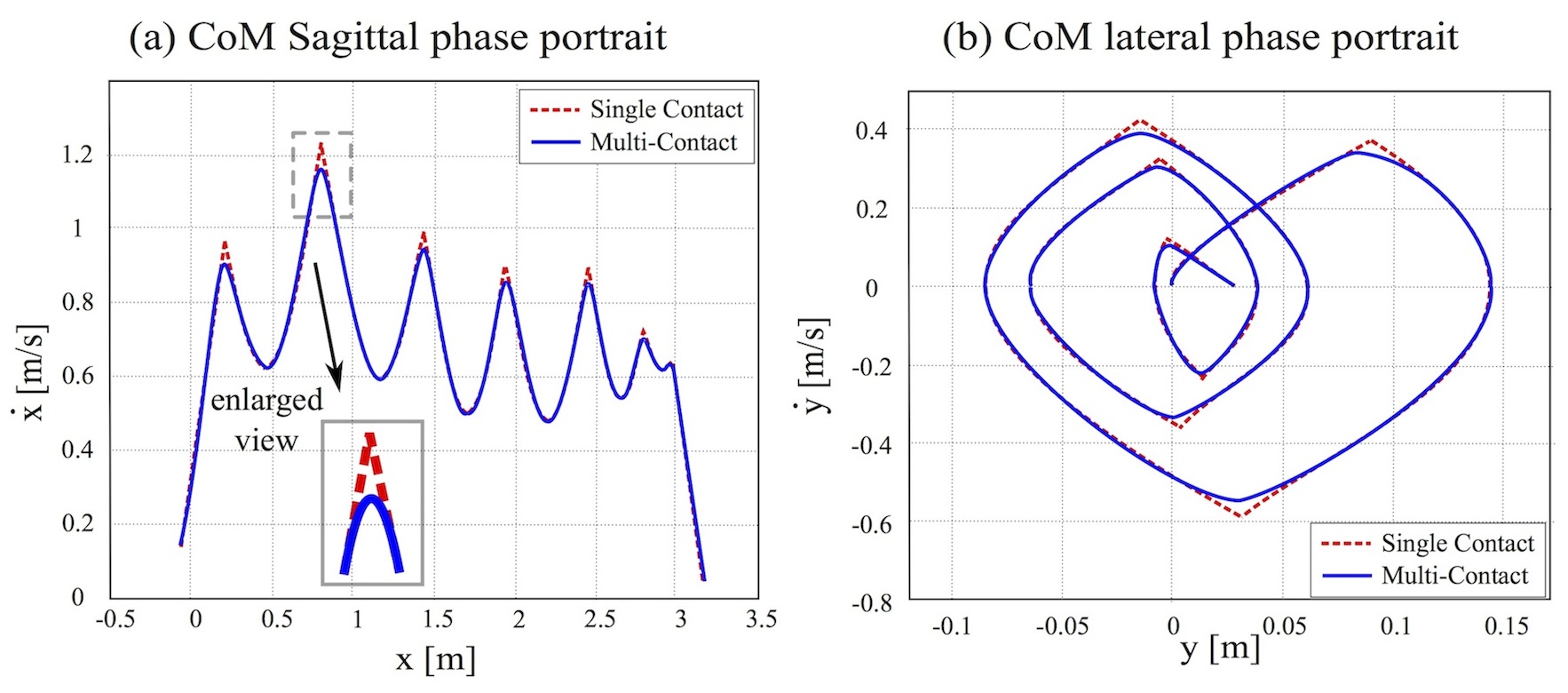}
 \caption{\captionsize Integration of multi-contact transition phases. The subfigures (a) and (b) are similar to their counterparts of Fig.~\ref{fig:3Dsinglecontact} but with an additional multi-contact phase. By using $5^{\rm th}$ order polynomials and guaranteeing continuity with the existing curves, we get the polynomial parameters for curve fitting.}
 \label{fig:Multicontact}
\end{figure}

\section{Proof of Phase-Space Tangent Manifold}
\label{sec:PSMDerivation}

In the nominal control case, the angular momentum $\tau_y$ of Eq.~(\ref{eq:accel}) is zero. For this case, the Sagittal inverted pendulum dynamics are simple, $\ddot{x} = \omega^2(x - p_x)$, where $\omega$ is constant for a given step. Since the foot placement $p_x$ is also constant over the step, then $\ddot{p}_x = \dot{p}_x = 0$. Therefore the previous equation is equivalent to $\ddot{x} - \ddot{p}_x = \omega^2 (x - p_x)$. Let us define a transformation $\tilde{x} = x - p_x$. We can then write $\ddot{\tilde{x}} = \omega^2 \tilde{x}$. Using Laplace transformations, we have $s^2 \tilde{x}(s) - \tilde{x}_0 - s \dot{\tilde{x}}_0 = \omega^2 \tilde{x}(s)$. Based on this, we get
\begin{equation}
\tilde{x}(t) = \mathscr{L}^{-1}\{\dfrac{\tilde{x}_0 + s\dot{\tilde{x}}_0}{s^2 - \omega^2}\}.
\end{equation}
Solving the equation above, we can derive an analytical solution
\begin{align}\label{eq:acceleration3}
\tilde{x}(t) = \dfrac{\tilde{x}_0 (e^{\omega t} + e^{-\omega t})}{2} + \dfrac{\dot{\tilde{x}}_0
(e^{\omega t} - e^{-\omega t})}{2 \omega} = \tilde{x}_0 {\rm cosh}(\omega t) + \dfrac{1}{\omega} \dot{\tilde{x}}_0 {\rm sinh}(\omega t),
\end{align}
and by taking its derivative, we get
\begin{align}\label{eq:acceleration3-0}
\dot{\tilde{x}}(t) = \omega \tilde{x}_0 {\rm sinh}(\omega t) + \dot{\tilde{x}}_0 {\rm cosh}(\omega t).
\end{align}
These two equations can be further expressed as
\begin{eqnarray}\label{eq: closedformposition}
x(t) &=& (x_0 - p_x) {\rm cosh}(\omega t) + \dfrac{1}{\omega} \dot{x}_0 {\rm sinh}(\omega t) + p_x
\\
\label{eq: closedformvelocity}
\dot{x}(t) &=& \omega (x_0 - p_x) {\rm sinh}(\omega t) + \dot{x}_0 {\rm cosh}(\omega t)
\end{eqnarray}
Now we have the following state space formulation
\begin{eqnarray}\nonumber
\begin{pmatrix}
x(t) - p_x\\
\dot{x}(t)\\
\end{pmatrix}
&=& \begin{pmatrix}
x_0 - p_x & \dot{x}_0 / \omega\\
\dot{x}_0 & \omega (x_0 - p_x)\\
\end{pmatrix}
\begin{pmatrix}
{\rm cosh}(\omega t)\\
{\rm sinh}(\omega t)\\
\end{pmatrix} \qquad
\end{eqnarray}\nonumber
which implies
\begin{eqnarray}\label{eq: sinhcosh}
\begin{pmatrix}
{\rm cosh}(\omega t)\\
{\rm sinh}(\omega t)\\
\end{pmatrix}
&=& \dfrac{1}{\omega (x_0 - p_x)^2 - \dot{x}^2_0/\omega}
\begin{pmatrix}
\omega (x_0 - p_x) & -\dot{x}_0 / \omega\\
-\dot{x}_0 & x_0 - p_x\\
\end{pmatrix}
\begin{pmatrix}
x - p_x\\
\dot{x}\\
\end{pmatrix}
\end{eqnarray}
since ${\rm cosh}^2(x) - {\rm sinh}^2(x) = 1$, we get 
\begin{align}\label{eq:surface0}
\big(\omega(x_0 - p_x) (x - p_x) - \dot{x}_0 \dot{x}/\omega\big)^2 - \big(- \dot{x}_0 (x - p_x) 
 + \dot{x} (x_0 - p_x)\big)^2 = \big(\omega (x_0 - p_x)^2 - \dot{x}^2_0/\omega\big)^2
\end{align}
After expanding the square terms and moving all terms to one side, we obtain 
\begin{align}
(x_0 - p_x) ^2\big(2\dot{x}^2_0 - \dot{x}^2 + \omega^2 (x - x_0) (x + x_0 - 2p_x)\big) - \dot{x}^2_0 (x - p_x)^2 + \dot{x}^2_0 (\dot{x}^2 - \dot{x}^2_0)/\omega^2 = 0
\end{align}
which is the phase-space tangent manifold $\sigma$ defined in Proposition~\ref{theorem:PSM}. $\qquad \qquad\qquad \qquad\qquad\qquad\qquad\qquad\qquad\qquad \qed$

\setcounter{algorithm}{2}
\begin{algorithm}[t]
\begin{algorithmic}[1]
\setstretch{1.1}
\STATE Initialize walking step index $k \gets 1$, discrete state $q$, initial condition $\mathcal{I}_{q}$, $\epsilon$ for invariant bundle $\mathcal{B}_{q}(\epsilon)$, 
stage update indicator $b_{\rm update} \gets \FALSE$.
\WHILE{$\boldsymbol{x}_{q} \notin \mathcal{G}_a^{[\delta_j]}(q, q+1)$}
\IF{$\boldsymbol{x}_{q} \in \mathcal{G}_d^{[\delta_j]}(q, q_{\rm dist})$}
\STATE Execute $\Delta_d^{[\delta_j]}(q, q_{\rm dist}, \boldsymbol{x}_{q_{\rm dist}})$ and quantize disturbed state $(\boldsymbol{x}_{q})_{\rm dist}$.
\STATE Generate optimal policies $(\zeta, \dot{x}, \ddot{x}, \tau, \omega, \mathcal{L}, \mathcal{V})_{\rm opt}$ by dynamic programming.
\STATE $b_{\rm update} \gets \TRUE$.
\STATE Compute the phase-space manifold $\sigma_{\rm trans}$ by Eq.~(\ref{eq:surface1}) at transition phase.
\IF{$(\boldsymbol{x}_{q})_{\rm trans} \notin \mathcal{R}_{q}$}
\STATE Re-plan $x_{{\rm foot}_{q+1}}$ by Eq.~(\ref{eq:replanfoot}) and search $y_{{\rm foot}_{q+1}}$ by Algorithm~2.
\ENDIF
\ENDIF
\STATE Compute $\sigma_{i+1}$ over domain $\mathcal{D}_{q}$ by Eq.~(\ref{eq:surface1}).
\IF{$b_{\rm update}$ is $\TRUE$}
\STATE Update stage index $i_{\rm stage}$ of recovery optimal control inputs.
\ENDIF
\IF{$\boldsymbol{x}_{q} \notin \mathcal{B}_{q}$}
\STATE Compute $\boldsymbol{u}_{c_{i+1}} =: (\tau_y, \omega)_{i_{\rm stage}}$ by Eq.~(\ref{eq:slidingcontrol_a}) and assign $\ddot{x}_{i+1} \gets \ddot{x}_{\rm opt}(i_{\rm stage})$.
\ELSE 
\STATE Compute $\boldsymbol{u}_{c_{i+1}} =: (\tau_y, \omega)_{i+1}$ by Eq.~(\ref{eq:slidingcontrol_b}) and assign $\ddot{x}_{i+1}$ by Eq.~(\ref{eq:accel}).
\ENDIF
\STATE Evolve $(x_{i+1}, \dot{x}_{i+1})$ over domain $\mathcal{D}_{q}$ numerically.
\STATE $i \gets i + 1$.
\ENDWHILE
\STATE $q \gets q+1$, re-assign $\mathcal{I}_{q+1}$, $b_{\rm update} \gets \FALSE$ and jump to line 2 for next walking step.
\end{algorithmic}
\caption{Overall Hybrid Robust Locomotion Planning Structure}
\label{al:overall-planning}
\end{algorithm}

\section{Proof of Phase-Space Cotangent Manifold}
\label{sec:OrthogonalManifoldDerivation}
In this case, we use the tangent manifold in Eq.~(\ref{eq:simplifiedPSM}) to derive the cotangent manifold $\zeta$. By taking the derivative of Eq.~(\ref{eq:simplifiedPSM}), we have
\begin{align}\label{eq:d_sigma}
d \sigma & = \dfrac{\partial \sigma}{\partial x} dx 
           + \dfrac{\partial \sigma}{\partial \dot{x}} d \dot{x},
\end{align}           
where
\begin{align}
            \dfrac{\partial \sigma}{\partial x}
           = -2 \dot{x}_{\rm apex}^2 (x - x_{\rm foot}), 
           \quad
             \dfrac{\partial \sigma}{\partial \dot{x}}
           =  2 \dot{x}_{\rm apex}^2 \dot{x}/\omega^2.
\end{align}
The $\sigma$ manifold's normal vector is given by its gradient, 
 ${\bf e}_n=\big(-2 \dot{x}_{\rm apex}^2 (x-x_{\rm foot})\;,
                  \;2 \dot{x}_{\rm apex}^2\dot{x}/\omega^2 \big)^T$, 
and its tangent vector is orthogonal to ${\bf e}_n$, i.e.,  
 ${\bf e}_t=\big(2\dot{x}_{\rm apex}^2 \dot{x}/\omega^2\;,\; 
                 2\dot{x}_{\rm apex}^2 (x-x_{\rm foot})\big)^T$. 
 Since $\zeta$ is orthogonal to $\sigma$, the tangent vector of $\zeta$ is the normal vector of $\sigma$, i.e., 
\begin{align}\label{eq:d_zeta}
d \zeta & = \dfrac{\partial \zeta}{\partial x} dx 
          + \dfrac{\partial \zeta}{\partial \dot{x}} d \dot{x},
\end{align}
where
\begin{align}
            \dfrac{\partial \zeta}{\partial x} = 2 \dot{x}_{\rm apex}^2 \dot{x}/\omega^2, 
          \quad 
            \dfrac{\partial \zeta}{\partial \dot{x}}  = -2 \dot{x}_{\rm apex}^2 (x - x_{\rm foot})
\end{align}
Via the equations above, we can further obtain
\begin{align}\label{eq:dxdy}
\dfrac{d \dot{x}}{d x} = -\dfrac{\dot{x}}{\omega^2 (x - x_{\rm foot})} 
                 \qquad \Rightarrow \qquad 
                 \omega^2 \int_{\dot{x}_0}^{\dot{x}}\dfrac{d\dot{x}}{\dot{x}} 
                 = -\int_{x_0}^{x} \dfrac{dx}{x-x_{\rm foot}}
\end{align}
then we have
\begin{align}\label{eq:ln}
\ln(\dfrac{\dot{x}}{\dot{x}_0})^{\omega^2} + \ln \dfrac{x - x_{\rm foot}}{x_0 - x_{\rm foot}} = 0
\qquad \Rightarrow \qquad 
(\dfrac{\dot{x}}{\dot{x}_0})^{\omega^2} \dfrac{x - x_{\rm foot}}{x_0 - x_{\rm foot}} = 1
\end{align}
Thus, the cotangent manifold can be defined as 
\begin{align}\label{eq:cotangent-manifold}
\zeta  = \zeta_0(\dfrac{\dot{x}}{\dot{x}_0})^{\omega^2} \dfrac{x - x_{\rm foot}}{x_0 - x_{\rm foot}}
\end{align}
where the constant $\zeta_0$ is a nonnegative scaling factor. $(x_0, \dot{x}_0)$ is the initial condition at $\zeta=\zeta_0$. The equation above is the phase-space cotangent manifold $\zeta$ defined in Proposition~\ref{prop:PSCoM}. $\qquad\qquad\qquad\qquad\qquad\qquad\qquad\qquad\qquad\qquad\qquad\qed$

\end{appendix_sec}

\bibliographystyle{chicago}
\bibliography{IJRR}

\end{document}